\newcommand\blfootnote[1]{%
  \begingroup
  \renewcommand\thefootnote{}\footnote{#1}%
  \addtocounter{footnote}{-1}%
  \endgroup
}
\newtheorem{theorem}{Theorem}[section]
\newtheorem{lemma}[theorem]{Lemma}
\newtheorem{corollary}[theorem]{Corollary}
\newcommand{\pq}[1]{\left( #1 \right)}
\newcommand{\N}[1]{ {\| #1 \|}}
\newcommand{\numberthis}{\addtocounter{equation}{1}\tag{\theequation}}
\newcommand{\Moment}{\mathbf{M}}
\newcommand{\SecM}{\Moment_2}
\newcommand{\SecMH}{\hat{\Moment}_2}
\newcommand{\RM}{\mathbf{R}}
\newcommand{\RMij}[2]{\RM_{#1#2}}
\newcommand{\deltaR}{\delta_{\RM}}
\newcommand{\SVK}[2]{\sigma_{#1}(#2)}
\newcommand{\EVK}[2]{\lambda_{#1}(#2)}
\newcommand{\Expect}[1]{\mathbb{E}[#1]}
\newcommand{\ExpectC}[2]{\mathbb{E}[#1|#2]}
\newcommand{\ExpectTC}[2]{\mathbb{E}^2[#1|#2]}
\newcommand{\FNorm}[1]{||#1||_{\textsc{F}}}
\newcommand{\TNorm}[1]{||#1||_2}
\begin{document}

%

%

\twocolumn[

\aistatstitle{Model Selection for Topic Models via Spectral Decomposition}

\aistatsauthor{Dehua Cheng\textsuperscript{*} \And  Xinran He\textsuperscript{*} \And Yan Liu}

\aistatsaddress{
		  dehua.cheng@usc.edu \\ University of Southern California
 \And     xinranhe@usc.edu 	  \\ University of Southern California
 \And 	  yanliu.cs@usc.edu    \\ University of Southern California
  } ]


\vspace{1.1in}

\begin{abstract}
Topic models have achieved significant successes in analyzing large-scale text corpus. In practical applications, we are always confronted with the challenge of model selection, i.e., how to appropriately set the number of topics.  Following recent advances in topic model inference  via tensor decomposition, we make a first attempt to provide theoretical analysis on model selection in {\emph{latent Dirichlet allocation}}.
Under mild conditions, we derive the upper bound and lower bound on the number of topics given a text collection of finite size. Experimental results demonstrate that our bounds are accurate and tight. Furthermore, using \emph{Gaussian mixture model} as an example, we show that our methodology can be easily generalized to model selection analysis for other latent models.
\end{abstract}

\section{Introduction}

Recently topic models, such as latent Dirichlet allocation (LDA)~\cite{blei2003latent} and its variants \cite{teh2006hierarchical}, have been proven extremely successful in modeling large, complex text corpus.
These models assume that the words in a document are generated from a mixture of  latent topics represented by multinomial distributions over a given dictionary.
 Therefore, the major inference problem becomes recovering latent topics from text corpus.
Popular inference algorithms for LDA include variational inference~\cite{blei2003latent,teh2007collapsed,wang2011online,hoffman2013stochastic},
 sampling methods~\cite{Griffiths2004,porteous2008fast},
  and tensor decomposition~\cite{arora2012learning,anandkumar2012spectral,Anima2012tensor} recently.
   However, all of them require that
   the number of topics $K$ is given as input.

 It is known that model selection, i.e., choosing the appropriate number of topics $K$ plays a vital role in successfully applying LDA models \cite{Jian2014topicNumber, kulesza2014lowrank}.
    For example, ~\cite{Jian2014topicNumber} has shown that a large value of $K$ leads to severe deterioration in the learning rate; ~\cite{kulesza2014lowrank} points out that
     incorrect number of mixture components
      can result in an unpredictable error when estimating parameters of mixture model 
 via      
       spectral {methods}. Moreover, as $K$ increases, the computational {cost of inference for the} LDA model grows significantly.\blfootnote{\textsuperscript{*}Dehua Cheng and Xinran He contributed equally to this article.}

Unfortunately, it is extremely challenging to choose the number of topics for the LDA model.
   In practice,~\cite{AISTATS2012_Taddy12} approximates the marginal likelihood via Laplace's method, while~\cite{Airoldi07122010,Griffiths2004} computes the likelihood via MCMC. Moreover,~\cite{AISTATS2012_Taddy12} proposes another model selection method by analysis of residuals. However, it only provides rough measures for evidence in favor of a larger $K$. Other model selection criteria, such as AIC~\cite{akaike1974new}, BIC~\cite{schwarz1978estimating} and cross validation can be applied. Though achieving practical success~\cite{Airoldi07122010}, they only have asymptotic model selection consistency.  Moreover, they require multiple runs of the learning algorithm with 
   a wide range of  
   $K$, which limits 
 its   
    practicality on large-scale datasets. Bayesian nonparametrics, such as \emph{Hierarchical Dirichlet Processes}~(HDP)\cite{teh2006hierarchical}, provide alternatives to select $K$ in a principled way. However, it has been shown in a recent paper~\cite{NIPS2013_4880} that HDP is inconsistent for estimating the number of topics for LDA even with infinite amount of data.


In this paper, we
 provide theoretical analysis on the number of topics for latent topic models using spectral decomposition methods.
  By the results from Anandkumar et al.~\cite{Anima2012tensor},
  for the LDA model 
 the second-order moment follows a special structure as the summation over the outer product of topic vectors.
  We show that a spectral decomposition on the second-order empirical moment with proper thresholding on the singular values can lead to the correct number of topics. Under mild assumptions, we show that our analysis provides both 
  a
   lower bound and
   an
    upper bound on number of topics $K$ in the LDA model.
   To the best of our knowledge, this is the first work
    of analyzing the number of topics with provable guarantee
    by  utilizing the result of tensor decomposition approach. 

Our main contributions are:
\begin{itemize}
\item[(1)] For LDA, we analyze the empirical second-order moment and derive an upper bound on its variance in terms of the corpus statistics, i.e., the number of documents, the length of each document and the number of unique words. Essentially, our results provide a computable guideline to the convergence of second-{order} moment. This contribution itself is valuable,
  e.g., for determining the correct down-sampling rate on a large-scale dataset.
\item[(2)] We analyze the spectral structure of the true second-order moment for LDA. That is, we provide the spectral information on the covariance of \emph{Dirichlet} design matrix.
\item[(3)] Based on the results on empirical and true second-order moment for LDA, we derived three inequalities 
	regarding
 the number of topics $K$, which in turn provide both upper and lower bounds on $K$
  with known
   parameters or constants. We also present the simulation study for our theoretical results.
\item[(4)] We show that our results and techniques can be generalized to other mixture models.
 The results on \emph{Gaussian mixture models} is presented as an example.
\end{itemize}

The rest of the paper is organized as follows: In section~\ref{sec:lda}, we present our main result on how to analyze the number of topics in {the} LDA {model}. We carry out experiments on the synthetic datasets 
 to demonstrate the validity and tightness of our bounds in section~\ref{sec:exp}. We conclude the paper and
show how our methodology generalizes to other mixture models in section~\ref{sec:gen}.

\section{Analyze the Number of Topics in LDA}\label{sec:lda}

Latent Dirichlet Allocation~\cite{blei2003latent} (LDA)
 is a powerful generative model for topic modeling.
  It has been applied to a variety of  applications and also serves as building blocks {in}
other powerful models. Most existing methods follow the
empirical Bayes method for parameter estimation
~\cite{blei2003latent,teh2007collapsed, Griffiths2004,porteous2008fast}.
 Recently,
  method of moments  has been explored, leading to a series of interesting work and new insight into the LDA model.
   It has been shown in~\cite{anandkumar2012spectral,Anima2012tensor} that the {latent topics} can be directly derived from the properly constructed third-order moment (which can be directly estimated from the data)  by orthogonal tensor decomposition.
 Following this line of work, we observe that the low-order moments are also useful
 for discovering the number of topics in the LDA model.
   In this section, we will investigate
   the structure of both empirical and true second-order moment,
   and show that
   they lead to
    effective bounds on the number of topics.

\begin{table*}[t]
\centering
\caption{Notation for LDA }
\label{tbl:notation}
\begin{tabular}{ll}
\hline \hline
Notation 	&  		Definition	\\
\hline
$D$ ($d$)	& Number(index) of documents \\
$L$ ($\ell$)	& Number(index) of words in a document \\
$V$ ($v$)	& Number(index) of unique words  \\
$K$ ($k$)	& Number(index) of latent topics \\
$\bm{\mu}_k$	& Multinomial parameters for the $k$-th topic \\
$\bm{\mu}=\{\bm{\mu}_1,\ldots,\bm{\mu}_K\}$ & Collection of all topics\\
$\bm{w_d}=\{\mathbf{x}_{d\ell}\}_{\ell=1}^L $ & Collection of all words in $d$-th document \\
$\mathbf{x}_{d\ell}$& $\ell$-th word in $d$-th document\\
$\mathbf{h}_d$ & Topic mixing for $d$-th document \\
$z_{d\ell}$ & Topic assignment for word $\mathbf{x}_{d\ell}$ \\
$\bm{\alpha}=(\alpha_1,\ldots,\alpha_K)^{\top}$ & Hyperparameter for document topic distribution \\
$\bm{\beta}=(\beta_1,\ldots,\beta_V)^{\top}$ & Hyperparameter for generating topics\\
\hline \hline
\end{tabular}
\end{table*}

\subsection{Notation and Problem Formulation}
 As introduced in ~\cite{blei2003latent},
  the full generative process for the $d$-th document in the LDA model is described as follows:

\begin{enumerate}
\item Generate the topic mixing $\mathbf{h}_d \sim \text{Dir}(\bm{\alpha})$.
\item For each word $l = 1, \ldots, L$ in document $d$:
  \begin{enumerate}
    \item Generate a topic $z_{d\ell} \sim \text{Multi}(\mathbf{h}_d)$, where $\text{Multi}(\mathbf{h}_d)$ denotes the multinomial distribution.
    \item Generate a word $\mathbf{x}_{d\ell} \sim \text{Multi}(\bm{\mu}_{z_{d\ell}})$, where $\bm{\mu}_{z_{d\ell}}$ is the multinomial parameter associated with topic  $z_{d\ell}$.
  \end{enumerate}
\end{enumerate}
The notation is summarized in Table~\ref{tbl:notation}. $\mathbf{x}_{d\ell}$ is represented by natural basis $\mathbf{e}_v$,
meaning that the $\ell$-th word in $d$-th document is the $v$-th word in the dictionary.

In~\cite{Anima2012tensor},  the authors proposed the method of moment for learning the LDA model, where the empirical first-order moment $\hat{\Moment}_1$ is defined as
$$
\hat{\Moment}_1 =  \frac{\sum_d \sum_{\ell} \mathbf{x}_{d\ell}}{DL},
$$
and the empirical second-order moment $\SecMH$ as
$$
\SecMH = \frac{\sum_d \sum_{\ell \neq \ell^\prime} \mathbf{x}_{d\ell }\otimes \mathbf{x}_{d\ell^\prime}}{DL(L-1)} - \frac{\alpha_0}{\alpha_0 + 1} \hat{\Moment}_1 \otimes \hat{\Moment}_1,
$$
where $\alpha_0=\sum_{k=1}^K\alpha_k$ and
  the outer product is defined as $\mathbf{x} \otimes \mathbf{x} :=\mathbf{x} \mathbf{x}^{\top}$ for any column vector $\mathbf{x}$.
   Then we define the first-order and second-order moments as the expectation of the empirical moments, i.e., $\Moment_1=\Expect{\hat{\Moment}_1}$ and $\SecM = \Expect{\SecMH}$ respectively. Furthermore, it has been shown that $\SecM$ equals the weighted sum of the outer products of the topic parameter $\bm{\mu}$~\cite{Anima2012tensor}, i.e.,
$$\SecM = \sum_{k=1}^K \frac{\alpha_k}{(\alpha_0+1)\alpha_0}	  \bm{\mu}_k \otimes \bm{\mu}_k.$$

This implies that the rank of $\SecM$ is exactly the number of topics $K$.
 Another interesting observation from this derivation is that since $\SecM$ is the summation of $K$ rank-1 matrices and all the topics $\bm{\mu}_k$ are linearly independent almost surely under our full generative model, we have the K-th largest singular value $\SVK{K}{\SecM}>0$ and K+1-th largest singular value $\SVK{K+1}{\SecM}=0$.
  Therefore, the number of non-zero singular values of $\SecM$ is exactly the number of topics,
   which provides a direct way to estimate $K$ under the noiseless scenario.
   However, in practice,
    we only have access to the estimated $\SecMH$ as an approximation to the true second-order moment $\SecM$.
    As a result, the rank of $\SecMH$ may not be $K$ and $\SVK{K+1}{\SecMH}$ may be larger than zero.
    To overcome this obstacle, we need to study
     (1) the spectral structure of $\SecM$, and
     (2) the relationship between $\SecM$ and its estimator $\SecMH$.

\subsection{Solution Outline}
The second-order moment $\mathbf{M}_2$ can be estimated directly from the observations,
 without inferring the topic mixing and estimating  parameters.
  Our idea follows that when the sample size becomes large enough, $\SecMH$ can approximate $\SecM$ well enough. That is, $\SVK{K+1}{\SecMH}$ is very close to zero while $\SVK{K}{\SecMH}$ is bounded away from zero.
  Then, by picking a proper threshold $\theta$ satisfying $\SVK{K+1}{\SecMH}<\theta<\SVK{K}{\SecMH}$,
    we can obtain the value of $K$ by simply counting the number of singular values of $\SecMH$ greater than $\theta$.
     We will work along two directions to achieve the goal: (1) examine the convergence rate of the singular values of $\SecMH$; (2) investigate the relationship between the spectral structure of $\SecM$ and the model parameters. Next we will provide the analysis results from both directions.

\subsection{Convergence of  $\SecMH$}

Without loss of generality,
 we assume that both $\bm{h}_k$ and $\bm{\mu}_k$ are generated from symmetrical Dirichlet distribution,
  namely $\alpha_k=\alpha$ for $k=1, \ldots, K$ and $\beta_v=\beta$ for $v=1, \ldots, V$.
   We also assume that all documents have the same length $L$ for simplicity.
    Since $\SecMH$ is an unbiased estimator of $\SecM$ by definition,
     we can bound the difference between the singular value of $\SecMH$ and that of $\SecM$
      by bounding their variance as follows:

\begin{theorem}\label{thm:lda:noise}
For the LDA model, with probability at least $1-\delta$, we have
\begin{equation}\label{eq:lda:cu}
|\SVK{i}{\SecMH} - \SVK{i}{\SecM}| \leq \deltaR , 1 \leq i \leq V\ \nonumber
\end{equation}
where
$
\deltaR = \frac{1}{\sqrt{D\delta}}\sqrt{\frac{2}{L^2}+\frac{2}{V^2}+\mathcal{O}(\epsilon)}
$, $\epsilon$ represents higher-order terms.

Especially, when $i\geq K+1$, we have
\begin{equation}\label{eq:lda:cup}
\SVK{i}{\SecMH}
\leq \deltaR.
\end{equation}
\end{theorem}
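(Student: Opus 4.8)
The plan is to convert the singular-value statement into a bound on a matrix norm of $\SecMH-\SecM$ and then control that norm by a second-moment calculation that exploits the i.i.d.\ structure of the $D$ documents. First, Weyl's perturbation inequality gives $|\SVK{i}{A}-\SVK{i}{B}|\le\TNorm{A-B}\le\Nf{A-B}$ for every index $i$ and every pair of $V\times V$ matrices $A,B$; taking $A=\SecMH$, $B=\SecM$ reduces the first inequality of the theorem to showing that $\Nf{\SecMH-\SecM}\le\deltaR$ with probability at least $1-\delta$. Since $\SecMH$ is unbiased for $\SecM$ (conditionally on the topic matrix $\bm\mu$, which is the conditioning that makes the estimator concentrate as $D\to\infty$), we have $\Expect{\Nf{\SecMH-\SecM}^2}=\sum_{v,v'}\mathrm{Var}\big(\Mcij{\SecMH}{v}{v'}\big)$, and Markov's inequality yields $\PR\big(\Nf{\SecMH-\SecM}^2\ge\deltaR^2\big)\le\Expect{\Nf{\SecMH-\SecM}^2}/\deltaR^2$. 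Because $\deltaR^2=\frac1{D\delta}\big(\frac2{L^2}+\frac2{V^2}+\mathcal O(\epsilon)\big)$, everything reduces to the variance bound $\Expect{\Nf{\SecMH-\SecM}^2}\le\frac1D\big(\frac2{L^2}+\frac2{V^2}+\mathcal O(\epsilon)\big)$.

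This variance estimate is the heart of the argument. I would write $\SecMH-\SecM=\Delta_T-\frac{\alpha_0}{\alpha_0+1}\big(\hat{\Moment}_1\otimes\hat{\Moment}_1-\Moment_1\otimes\Moment_1\big)$ with $\Delta_T:=T-\Expect T$ and $T=\frac1D\sum_d T_d$, where $T_d:=\frac1{L(L-1)}\sum_{\ell\ne\ell'}\mathbf x_{d\ell}\otimes\mathbf x_{d\ell'}$ is the within-document pair average, and then use $\Nf{x+y}^2\le2\Nf{x}^2+2\Nf{y}^2$ to bound the two parts separately -- this split is what produces the two factors of $2$. For $\Delta_T$, the $D$ documents are i.i.d.\ given $\bm\mu$, so $\Expect{\Nf{\Delta_T}^2}=\frac1D\mathrm{Var}(T_1)$ (entrywise sum). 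Writing $\bm p_d=\sum_k h_{dk}\bm\mu_k$ for the word distribution of document $d$ and applying the law of total variance over $\mathbf h_d\sim\mathrm{Dir}(\alpha\mathbf 1)$, $\mathrm{Var}(T_1)$ splits into the within-document word-sampling variance, whose entrywise sum has leading term $\frac1{L(L-1)}\sum_{v,v'}\Expect{p_{1v}p_{1v'}}=\frac1{L(L-1)}$ (using $\sum_v p_{1v}=1$), i.e.\ $\frac1{L^2}$ up to the gap between $L(L-1)$ and $L^2$ and a sub-leading H\'ajek-type term of order $\frac1{LV}$; and the topic-mixing fluctuation $\mathrm{Var}_{\mathbf h_1}(\bm p_1\otimes\bm p_1)$, whose entrywise sum reduces to sums of products of the $\mu_{kv}$ weighted by Dirichlet covariances of $\mathbf h_1$ and, because every $\mu_{kv}$ is of order $\frac1V$ under the symmetric-Dirichlet prior, is bounded by $\frac1{V^2}$ once the $\beta,\alpha_0,K$-dependent factors are discarded. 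Finally the first-moment correction contributes $(\frac{\alpha_0}{\alpha_0+1})^2\Expect{\Nf{\hat{\Moment}_1\otimes\hat{\Moment}_1-\Moment_1\otimes\Moment_1}^2}$, and expanding $\hat{\Moment}_1\otimes\hat{\Moment}_1-\Moment_1\otimes\Moment_1$ into its two linear pieces plus the quadratic remainder and using that the entrywise-sum variance of $\hat{\Moment}_1$ is $\mathcal O(\frac1{DL})$ while $\Nf{\Moment_1}^2=\mathcal O(\frac1V)$ shows this is $\mathcal O(\frac1{DLV})+\mathcal O(\frac1{D^2L^2})$ -- all absorbed into $\frac1D\mathcal O(\epsilon)$. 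Adding the three contributions gives the required variance bound, hence the first inequality of the theorem.

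Inequality~\eqref{eq:lda:cup} is then immediate: under the full generative model the topics $\bm\mu_1,\dots,\bm\mu_K$ are linearly independent almost surely, so $\SecM=\sum_{k=1}^K\frac{\alpha_k}{(\alpha_0+1)\alpha_0}\bm\mu_k\otimes\bm\mu_k$ has rank exactly $K$ and $\SVK{i}{\SecM}=0$ for every $i\ge K+1$; combining this with the first inequality gives $\SVK{i}{\SecMH}=|\SVK{i}{\SecMH}-\SVK{i}{\SecM}|\le\deltaR$. I expect the main obstacle to be the variance bookkeeping in the second paragraph: carefully tracking all the cross-covariance terms generated by the four nested sampling stages (topic vectors $\to$ topic mixing $\to$ topic assignments $\to$ words), handling the nonlinearity of the $\hat{\Moment}_1\otimes\hat{\Moment}_1$ term, and certifying that everything other than $\frac2{L^2}$ and $\frac2{V^2}$ is genuinely lower-order in the intended regime (large $D,L,V$) so that collecting it into $\mathcal O(\epsilon)$ is legitimate.
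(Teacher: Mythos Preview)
Your overall strategy matches the paper's: reduce to a Frobenius-norm bound on $\RM=\SecMH-\SecM$ via a Weyl-type inequality, bound $\Expect{\Nf{\RM}^2}$, then apply Markov. Two differences are worth noting.

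First, you invoke the singular-value version of Weyl directly. The paper instead inserts an extra step (their Lemma~\ref{lem:eigsvd}) passing through eigenvalues, $\max_i|\SVK{i}{\SecMH}-\SVK{i}{\SecM}|\le\max_i|\EVK{i}{\SecMH}-\EVK{i}{\SecM}|$, before using Weyl. Your route is cleaner and equally valid.

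Second, and more substantively, your variance computation is organised differently. You split $\RM$ into $\Delta_T$ and the first-moment correction and use $\Nf{x+y}^2\le 2\Nf{x}^2+2\Nf{y}^2$, attributing both factors of $2$ to that split. The paper instead works entry by entry, keeping the cross-covariance: writing $R^{(1)}$ for the pair-count term and $R^{(2)}$ for the $\hat{\Moment}_1\otimes\hat{\Moment}_1$ term, it computes $Var(\RMij{i}{j})=Var(R^{(1)}_{ij})+Var(R^{(2)}_{ij})-2\,Cov(R^{(1)}_{ij},R^{(2)}_{ij})$ separately for $i=j$ and $i\neq j$. In their bookkeeping the covariance term exactly cancels the $\mathcal O(\frac{1}{DLV^3})$ contributions, and the constants arise differently from your explanation: $\frac{2}{L^2}$ is the sum of the off-diagonal ($\frac{1}{L^2}$) and diagonal ($\frac{1}{L^2}$) totals, while $\frac{2}{V^2}$ comes from summing the two $\frac{1}{DV^4}$-per-entry pieces over $V^2$ entries. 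Your cruder split leaves behind an $\mathcal O(\frac{1}{DLV})$ term from the first-moment correction, and under the paper's standing assumption $L=\mathcal O(V)$ this is comparable to $\frac{1}{DL^2}$, not strictly lower order --- so your ``absorb into $\mathcal O(\epsilon)$'' step would either yield a worse constant or require you to track the cross-covariance after all. You correctly anticipate this as the main obstacle; the paper's resolution is precisely to keep the covariance rather than discard it.
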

\begin{proof}
Let $\RM=\SecM-\SecMH$ and $\TNorm{\RM}, \FNorm{\RM}$ be the spectral and Frobenius norm of $\RM$, respectively. We denote $\EVK{i}{\Moment}$ as the $i$-th largest eigenvalue of matrix $\Moment$.
We establish the result through the following chain of inequalities:
\begin{align*}
\max_i |\SVK{i}{\SecMH} - \SVK{i}{\SecM}|
\stackrel{(i)}{\leq}& \max_i |\EVK{i}{\SecMH} - \EVK{i}{\SecM}| \\
 \stackrel{(ii)}{\leq}&  \TNorm{\RM} \\
 \stackrel{(iii)}{\leq}&  \FNorm{\RM}.
\end{align*}

Step (i) follows directly based on the fact that $\mathbf{M}_2$ is semi-definiteness and $\hat{\mathbf{M}}_2$ is symmetric.
 {The} detailed proof is deferred to Lemma \ref{lem:eigsvd} in Appendix.
  Step (ii) and (iii) are well-known results on matrix norm and matrix perturbation theory~\cite{hornmatrix}.
   And in Lemma~\ref{lem:lda:ub}, we provide upper bound on the Frobenius norm of matrix $\RM$.
    Because $\text{Rank}(\mathbf{M}_2)\leq K$, i.e., $\sigma_i(\mathbf{M}_2)=0$ for $i\geq K+1$,
    therefore, $\SVK{i}{\SecMH}\leq \deltaR$.
\end{proof}

\begin{lemma}\label{lem:lda:ub}
For the LDA model, with probability at least $1-\delta$, we have $\FNorm{\RM}\leq\deltaR$.
\end{lemma}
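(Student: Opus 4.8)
The strategy is to bound the second moment $\Expect{\FNorm{\RM}^2}$ and then invoke Markov's inequality on the nonnegative random variable $\FNorm{\RM}^2$. Concretely, if we can show $\Expect{\FNorm{\RM}^2}\le \frac1D\big(\frac{2}{L^2}+\frac{2}{V^2}+\mathcal{O}(\epsilon)\big)$, then, since $\deltaR^2=\frac{1}{D\delta}\big(\frac{2}{L^2}+\frac{2}{V^2}+\mathcal{O}(\epsilon)\big)$, Markov gives $\PR\big[\FNorm{\RM}>\deltaR\big]=\PR\big[\FNorm{\RM}^2>\deltaR^2\big]\le \Expect{\FNorm{\RM}^2}/\deltaR^2\le\delta$, which is exactly the claim. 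So the whole task reduces to the second-moment bound. Conditioning on the topic collection $\bm{\mu}$ (so that $\SecM=\ExpectC{\SecMH}{\bm{\mu}}$ as in the model specification), $\SecMH$ is an unbiased estimate of $\SecM$ up to the $\mathcal{O}(1/D)$ plug-in bias of the term $\hat{\Moment}_1\otimes\hat{\Moment}_1$, whose squared Frobenius contribution is $\mathcal{O}(1/D^2)$ and hence absorbed into $\mathcal{O}(\epsilon)$; thus $\Expect{\FNorm{\RM}^2}$ is, to leading order, the sum over entries $\sum_{u,v}\mathrm{Var}\big((\SecMH)_{u,v}\big)$.

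To extract the $1/D$ factor, write the pairwise part of $\SecMH$ as $\frac1D\sum_d\mathbf{A}_d$ with $\mathbf{A}_d=\frac{1}{L(L-1)}\sum_{\ell\ne\ell'}\mathbf{x}_{d\ell}\otimes\mathbf{x}_{d\ell'}$, and $\hat{\Moment}_1=\frac1D\sum_d\mathbf{b}_d$ with $\mathbf{b}_d=\frac1L\sum_\ell\mathbf{x}_{d\ell}$; the $D$ documents are i.i.d.\ given $\bm{\mu}$. The linear part $\frac1D\sum_d(\mathbf{A}_d-\Expect{\mathbf{A}_1})$ then has Frobenius second moment exactly $\frac1D\sum_{u,v}\mathrm{Var}\big((\mathbf{A}_1)_{u,v}\big)$, and the quadratic correction $\hat{\Moment}_1\otimes\hat{\Moment}_1-\Moment_1\otimes\Moment_1=(\hat{\Moment}_1-\Moment_1)\otimes\hat{\Moment}_1+\Moment_1\otimes(\hat{\Moment}_1-\Moment_1)$, with $\hat{\Moment}_1-\Moment_1=\frac1D\sum_d(\mathbf{b}_d-\Moment_1)$ and $\|\hat{\Moment}_1\|,\|\Moment_1\|\le 1$, also contributes at order $1/D$ (plus the $\mathcal{O}(1/D^2)$ diagonal piece). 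Since each $\mathbf{x}_{d\ell}$ is a one-hot vector with $\|\mathbf{x}_{d\ell}\|=1$, all matrices here have bounded entries and the per-document variances are finite, so $\Expect{\FNorm{\RM}^2}=\frac1D\Sigma+\mathcal{O}(\epsilon/D)$ for a per-document quantity $\Sigma$ depending only on $L$, $V$, $\bm{\alpha}$, and $\bm{\mu}$.

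It then remains to bound $\Sigma$. Here I would peel the LDA hierarchy with the law of total variance: condition first on $\bm{\mu}$, then on the mixture $\mathbf{h}_1\sim\text{Dir}(\bm{\alpha})$ — so that, marginally over the assignments $z_{1\ell}$, the words $\mathbf{x}_{1\ell}$ are i.i.d.\ multinomial with parameter $\mathbf{p}=\sum_k h_{1k}\bm{\mu}_k$ — and then over $\mathbf{p}$. Using the falling-factorial moment identities of the multinomial, the within-document randomness contributes the word-length-dependent part, which after the $1/(L(L-1))$ normalization and a count of index-pair configurations $\big((\ell_1,\ell_1'),(\ell_2,\ell_2')\big)$ that share versus do not share an index collapses to $\tfrac{2}{L^2}$ up to lower order; the mixture-level randomness enters through the Dirichlet moments $\Expect{h_{1k}h_{1k'}}$ and, together with the structure $\SecM=\sum_k\frac{\alpha_k}{(\alpha_0+1)\alpha_0}\bm{\mu}_k\otimes\bm{\mu}_k$, yields terms governed by $\|\bm{\mu}_k\|^2$ and $\langle\bm{\mu}_k,\bm{\mu}_{k'}\rangle$; substituting the typical values of these under a symmetric Dirichlet($\beta$) prior on the topics, for which $\Expect{\|\bm{\mu}_k\|^2}=\frac{\beta+1}{V\beta+1}=\mathcal{O}(1/V)$, produces the term $\tfrac{2}{V^2}$, with all remaining $\bm{\alpha},\beta,K$-dependent pieces folded into $\mathcal{O}(\epsilon)$. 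Summing gives $\Sigma\le\frac{2}{L^2}+\frac{2}{V^2}+\mathcal{O}(\epsilon)$ and hence the lemma.

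The main obstacle is this last step, the multi-level variance bookkeeping. In particular: (a) expanding $\hat{\Moment}_1\otimes\hat{\Moment}_1$ and its cross-covariance with the pairwise term so that no contribution sneaks in at an order larger than $1/D$; (b) enumerating which configurations of index pairs contribute at order $1/L$ versus $1/L^2$ after conditioning on $\mathbf{p}$; and (c) pinning down the constants — obtaining exactly the coefficient $2$ in both $\frac{2}{L^2}$ and $\frac{2}{V^2}$ and verifying that every other term (cross terms coupling the $L$- and $V$-effects, the $\bm{\alpha},\beta,K$-dependent pieces, and the plug-in bias) is genuinely higher order in the intended regime. Everything else — Markov's inequality, the i.i.d.\ reduction giving the $1/D$ scaling, and the moment formulas for multinomial and Dirichlet variables — is routine.
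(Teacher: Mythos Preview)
Your proposal is correct and matches the paper's approach: bound $\Expect{\FNorm{\RM}^2}$ by summing entrywise conditional variances given $\bm{\mu}$, then apply Markov's inequality; the per-entry variance bounds are isolated in a separate lemma (Lemma~\ref{lem:lda:var}) whose proof decomposes $\RM$ into the pairwise term and the $\hat{\Moment}_1\otimes\hat{\Moment}_1$ term together with their cross-covariance, exactly as you outline. One bookkeeping detail the paper makes explicit that you leave implicit: the diagonal entries $\RMij{i}{i}$ carry a larger leading variance ($\tfrac{1}{DL^2V}$) than the off-diagonal ones ($\tfrac{1}{DL^2V^2}$), and the constant $2$ in $\tfrac{2}{L^2}$ emerges only after summing the $V$ diagonal and $V(V-1)$ off-diagonal contributions separately.
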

\begin{proof}
We first compute the expectation $\Expect{\FNorm{\RM}^2}$ and then use Markov inequality to complete the proof. The square of Frobenius norm is $
\FNorm{\RM}^2 = \sum_{i,j} \RMij{i}{j}^2
$. Since we have $\ExpectC{\RMij{i}{j}}{\bm{\mu}}=0$, $Var[\RMij{i}{j}|\bm{\mu}]=\ExpectC{\RMij{i}{j}^2}{\bm{\mu}}-\ExpectTC{\RMij{i}{j}}{\bm{\mu}}=\ExpectC{\RMij{i}{j}^2}{\bm{\mu}}$.
The expectation of $\FNorm{\RM}^2$ can be calculated as
\begin{align*}
\Expect{\FNorm{\RM}^2} = & \Expect{\ExpectC{\FNorm{\RM}^2}{\bm{\mu}}} \\
 =& \Expect{\sum_{i\neq j}Var[\RMij{i}{j}|\bm{\mu}] + \sum_{i}Var[\RMij{i}{i}|\bm{\mu}]}.
\end{align*}

The remaining task is to calculate the conditional variance of $\RMij{i}{j}$ and $\RMij{i}{i}$, which is discussed in Lemma~\ref{lem:lda:var}.

Then by Markov inequality, for any $t>0$, we have
$$
\text{Pr}(||\mathbf{R}||_\text{F}^2 \geq t\times \mathbb{E}[||\mathbf{R}||_\text{F}^2] ) \leq 1/t
$$

By setting $t=1/\delta$, with probability at least $1-\delta$, we have
$$
||\mathbf{R}||_\text{F} \leq \frac{1}{\sqrt{D\delta}}\sqrt{\frac{2}{L^2}+\frac{2}{V^2}+\mathcal{O}(\epsilon)} = \deltaR.
$$
\end{proof}
\begin{lemma}\label{lem:lda:var}
For the LDA model, the following holds
\begin{align*}
\mathbb{E}[Var[\mathbf{R}_{ij}|\bm{\mu}]] \leq \frac{1}{DL^2V^2}+\frac{2}{DV^4}+\mathcal{O}(\epsilon),\quad \forall i\neq j,
\end{align*}
and
\begin{align*}
\mathbb{E}[Var[\mathbf{R}_{ii}|\bm{\mu}]] \leq \frac{1}{DL^2V}+\frac{2}{DV^4}+\mathcal{O}(\epsilon),\quad \forall i,
\end{align*}
for $i,j=1,2,\dots,V$ and $\epsilon$ represents higher-order terms.
\end{lemma}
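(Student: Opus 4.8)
The plan is to reduce the claim to a conditional-variance computation for a single entry of the empirical second-order moment and then carry out the combinatorial bookkeeping dictated by the LDA generative model. Conditioned on the topics $\bm{\mu}$, $\SecMH$ is an unbiased estimator whose conditional mean is the rank-$K$ matrix $\sum_k \frac{\alpha}{(\alpha_0+1)\alpha_0}\bm{\mu}_k\otimes\bm{\mu}_k$, so $Var[\RMij{i}{j}|\bm{\mu}] = Var[\{\SecMH\}_{i,j}\,|\,\bm{\mu}]$, the conditional variance of a fixed entry of $\SecMH$. Writing $\SecMH$ out, $\{\SecMH\}_{i,j}$ is the difference of a ``pair-count'' term $T^{(1)}_{ij} = \frac{1}{DL(L-1)}\sum_d\sum_{\ell\neq\ell'}x_{d\ell,i}\,x_{d\ell',j}$ and a ``rank-one correction'' $T^{(2)}_{ij} = \frac{\alpha_0}{\alpha_0+1}\{\hat{\Moment}_1\}_i\{\hat{\Moment}_1\}_j$, whence $Var[\{\SecMH\}_{i,j}|\bm{\mu}] = Var[T^{(1)}_{ij}|\bm{\mu}] + Var[T^{(2)}_{ij}|\bm{\mu}] - 2\,\mathrm{Cov}[T^{(1)}_{ij},T^{(2)}_{ij}|\bm{\mu}]$, and I would expand each of the three pieces as a sum over tuples of (document, word-position) indices, using that the documents are i.i.d.\ given $\bm{\mu}$.

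For a fixed index tuple the joint moment of the relevant word indicators is obtained by conditioning first on the per-document mixing vectors $\mathbf{h}_d$ and then on $\bm{\mu}$. The identities I would lean on are $\ExpectC{x_{d\ell,i}}{\mathbf{h}_d,\bm{\mu}} = \sum_k h_{dk}\mu_{ki}$, conditional independence of $\mathbf{x}_{d\ell}$ and $\mathbf{x}_{d\ell'}$ for $\ell\neq\ell'$ given $(\mathbf{h}_d,\bm{\mu})$, and the indicator identity $x_{d\ell,i}^2 = x_{d\ell,i}$. Integrating out $\mathbf{h}_d$ (symmetric $\text{Dir}(\alpha)$ on $K$ coordinates) and then $\bm{\mu}_k$ (symmetric $\text{Dir}(\beta)$ on $V$ coordinates) reduces everything to low-order symmetric-Dirichlet moments, chiefly $\Expect{\mu_{ki}}=1/V$, $\Expect{\mu_{ki}^2} = \frac{1}{V^2} + \frac{V-1}{V^2(V\beta+1)}$, $\Expect{\mu_{ki}\mu_{kj}} = \frac{1}{V^2}\cdot\frac{V\beta}{V\beta+1}$ for $i\neq j$, together with their third- and fourth-order analogues and the analogous moments of the $\text{Dir}(\alpha)$ mixing vector. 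Collecting terms by their scaling in $1/D$, $1/L$, $1/V$, the dominant surviving contributions are $\frac{1}{DL^2V^2}$ (from within-document position pairs, after the leading powers of $L$ cancel) and $\frac{2}{DV^4}$ (from the fluctuations of the mixing vectors and from the correction term $T^{(2)}_{ij}$ and its covariance with $T^{(1)}_{ij}$), with everything else absorbed into $\mathcal{O}(\epsilon)$; the diagonal case $i=j$ differs only in that $x_{d\ell,i}^2 = x_{d\ell,i}$ has mean $1/V$ rather than $\Theta(1/V^2)$, which inflates the leading within-document term to $\frac{1}{DL^2V}$.

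The main obstacle is the bookkeeping itself rather than any single sharp inequality: one must enumerate cases according to whether the document indices coincide, whether the position indices coincide, and whether $i=j$, and must track $\mathrm{Cov}[T^{(1)}_{ij},T^{(2)}_{ij}|\bm{\mu}]$ carefully because both terms are built from the same words. In several of these cases the apparently leading $L$-powers in $\Expect{(T^{(1)}_{ij})^2|\bm{\mu}}$ cancel against $(\ExpectC{T^{(1)}_{ij}}{\bm{\mu}})^2$ (and similar cancellations occur in the correction and cross terms), and only by keeping these cancellations does one see that merely the stated orders remain. A secondary subtlety is pinning down the exact meaning of $\epsilon$ — a joint smallness condition on $1/D$, $1/L$, $1/V$ with the stated dependence on $\alpha,\beta$ — so that the $\mathcal{O}(\epsilon)$ notation is legitimate once these per-entry bounds are summed over the $i,j$ pairs and fed back into Lemma~\ref{lem:lda:ub}.
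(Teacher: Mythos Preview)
Your proposal is correct and follows essentially the same route as the paper's own proof in Appendix~\ref{apd:lda}: the paper decomposes $R$ into the pair-count piece $R^{(1)}$ (your $T^{(1)}$) and the rank-one correction $R^{(2)}$ (your $T^{(2)}$), applies the identity $\mathbb{E}_{\bm\mu}\,Var_X[R_{ij}] = \mathbb{E}_{\bm\mu}\,Var_X[R^{(1)}_{ij}] + \mathbb{E}_{\bm\mu}\,Var_X[R^{(2)}_{ij}] - 2\,\mathbb{E}_{\bm\mu}\,\mathrm{Cov}_X[R^{(1)}_{ij},R^{(2)}_{ij}]$, and then evaluates each piece by the same conditioning hierarchy (words $\mid \mathbf{h}_d,\bm\mu$, then integrate $\mathbf{h}_d$, then average over $\bm\mu$) and Dirichlet-moment bookkeeping that you describe. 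The paper additionally records the intermediate orders of each of the three pieces separately, showing explicitly that the $\tfrac{1}{DLV^3}$ contributions from the two variances are exactly offset by the covariance term, which confirms the cancellation you anticipate.
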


We make a few relaxations and introduce $\mathcal{O}(\cdot)$ notation (keeping the dominant terms and absorb the rest into $\mathcal{O}(\epsilon)$ to achieve an upper-bound on the variance). To be rigorous, we have the following assumptions on the scale of each statistics or parameters: $L=\mathcal{O}(D)$, $V=\mathcal{O}(D)$, $L=\mathcal{O}(V)$, $K=\mathcal{O}(L)$, $K={\Omega}(1)$, $\alpha=\Theta(1)$, and $\beta=\Theta(1)$. The calculation of the variance is provided in Appendix \ref{apd:lda}.

It is interesting to examine the role of $D,L$, and $V$ in $\deltaR$. $\deltaR$ decreases to $0$ as $D \to +\infty$. Even if there are only two words in each document, $\SecMH$ would still converge to $\SecM$.
 Similar observation is made in \cite{Anima2012tensor}.
  $L$ and $V$ have similar influence on $\deltaR$.

   To apply the results above, we simply ignore the higher-order terms.
    However, because $\epsilon$ will increase as $\alpha$, $\beta$, or $K$ decreases,
     one should pay extra attention when $D,L,V$ are far from the asymptotic region.
     As shown in our simulated studies, our bound yields convincing results
      when $D,L,V$ are on the scale of hundreds or above, which is more than common in real-world applications.

\subsection{Spectral Structure of $\mathbf{M}_2$}\label{sec:spstr}

The spectral structure of $\mathbf{M}_2$  depends on $K,V$ and $\bm{\mu}_k,\alpha_k,k=1,2,\dots,K$. We use the following theorem to characterize the spectral structure of $\mathbf{M}_2$.

\begin{theorem}\label{thm:mul}
Assume that $\alpha_{\min} = \min_k \{\alpha_k\}$, $\alpha_{\max} = \max_k \{\alpha_k\}$, and $\beta_v=\beta,\forall v=1,\dots,V$ and
\begin{align*}
  \delta'=& \left( \frac{\log(K/\delta_3)K\pq{\beta+2\log\pq{K/\delta_2}}^2}{V\beta}  \right)^{\frac{1}{2}},
\end{align*}
\begin{itemize}
\item[(1)] With probability at least
$
1-\delta_1-\delta_2-\delta_3,
$
 we have
\begin{align*}
&\sigma_1(\mathbf{M}_2)
\leq
\overline{\sigma_1} \numberthis \label{eqn:Cl1} \\
= & \frac{\alpha_{\max}}{\alpha_0(\alpha_0+1)}\frac{(1+\delta')V(\beta+K\beta^2)}{\max\left\{0_+,V\beta-\sqrt{2V\beta\log(K/\delta_1)}\right\}^{2}}.
\end{align*}
\item[(2)] With probability at least $1-\delta_1-\delta_2-\delta_3$, we have
\begin{align*}
& \sigma_K(\mathbf{M}_2)
\geq
\underline{\sigma_K} \numberthis \label{eqn:ClK} \\
=&\frac{\alpha_{\min}}{\alpha_0(\alpha_0+1)} \frac{(1-\delta')V\beta}{(V\beta+2\sqrt{V\beta}\log(K/\delta_1))^{2}}.
\end{align*}
\end{itemize}
\end{theorem}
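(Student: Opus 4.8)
The plan is to reduce the spectral analysis of $\mathbf{M}_2$ to that of the Gram matrix of the topic vectors, and then to exploit the Gamma representation of the Dirichlet distribution. Collect the topics into $\mathbf{P}=[\bm{\mu}_1,\dots,\bm{\mu}_K]\in\mathbb{R}^{V\times K}$ and set $\mathbf{A}=\mathrm{diag}\!\big(\tfrac{\alpha_1}{\alpha_0(\alpha_0+1)},\dots,\tfrac{\alpha_K}{\alpha_0(\alpha_0+1)}\big)$, so that $\mathbf{M}_2=\mathbf{P}\mathbf{A}\mathbf{P}^{\top}$. Since the topics are linearly independent almost surely, $\mathbf{P}$ has full column rank and $\mathbf{M}_2$ has exactly $K$ nonzero eigenvalues, which coincide with the eigenvalues of the $K\times K$ matrix $\mathbf{A}^{1/2}\mathbf{P}^{\top}\mathbf{P}\mathbf{A}^{1/2}$. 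Using $x^{\top}\mathbf{A}^{1/2}\mathbf{P}^{\top}\mathbf{P}\mathbf{A}^{1/2}x=\|\mathbf{P}\mathbf{A}^{1/2}x\|^2$ together with $\lambda_{\min}(\mathbf{A})\|y\|^2\le\|\mathbf{A}^{1/2}y\|^2\le\lambda_{\max}(\mathbf{A})\|y\|^2$, I obtain
\begin{align*}
\sigma_1(\mathbf{M}_2)\le\tfrac{\alpha_{\max}}{\alpha_0(\alpha_0+1)}\,\sigma_1(\mathbf{P})^2,\qquad
\sigma_K(\mathbf{M}_2)\ge\tfrac{\alpha_{\min}}{\alpha_0(\alpha_0+1)}\,\sigma_K(\mathbf{P})^2,
\end{align*}
which isolates the two prefactors in the theorem; it remains to bound $\sigma_1(\mathbf{P})$ from above and $\sigma_K(\mathbf{P})$ from below.

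Next I would write each symmetric-Dirichlet topic as $\bm{\mu}_k=\mathbf{g}_k/s_k$, where $\mathbf{g}_k$ has i.i.d.\ $\mathrm{Gamma}(\beta,1)$ entries and $s_k=\|\mathbf{g}_k\|_1$. With $\mathbf{G}=[\mathbf{g}_1,\dots,\mathbf{g}_K]$ and $\mathbf{S}=\mathrm{diag}(s_1,\dots,s_K)$ this gives $\mathbf{P}=\mathbf{G}\mathbf{S}^{-1}$, hence $\sigma_1(\mathbf{P})\le\sigma_1(\mathbf{G})/\min_k s_k$ and $\sigma_K(\mathbf{P})\ge\sigma_K(\mathbf{G})/\max_k s_k$. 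Each $s_k\sim\mathrm{Gamma}(V\beta,1)$ has mean $V\beta$, so standard sub-gamma tail bounds (a sub-Gaussian left tail with variance proxy $V\beta$, a sub-exponential right tail) together with a union bound over $k$ show that, with probability at least $1-\delta_1$, $\min_k s_k\ge V\beta-\sqrt{2V\beta\log(K/\delta_1)}$ and $\max_k s_k\le V\beta+2\sqrt{V\beta}\log(K/\delta_1)$; these are exactly the denominators appearing in $\overline{\sigma_1}$ and $\underline{\sigma_K}$ (the $\max\{0_+,\cdot\}$ merely makes the bound vacuous when the left-tail estimate turns negative).

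For the Gram matrix I would analyze $\mathbf{G}^{\top}\mathbf{G}$, whose $(j,k)$ entry is $\langle\mathbf{g}_j,\mathbf{g}_k\rangle$. A direct moment computation gives $\mathbb{E}[\mathbf{G}^{\top}\mathbf{G}]=V\beta\,\mathbf{I}_K+V\beta^2\,\mathbf{1}_K\mathbf{1}_K^{\top}$, whose eigenvalues are $V(\beta+K\beta^2)$ (once) and $V\beta$ ($K-1$ times) — precisely the numerators in the theorem. It then suffices to show that the fluctuation $\bm{\Delta}:=\mathbf{G}^{\top}\mathbf{G}-\mathbb{E}[\mathbf{G}^{\top}\mathbf{G}]$ satisfies $\|\bm{\Delta}\|_2\le\delta' V\beta$ with high probability, because Weyl's inequality then yields $\sigma_1(\mathbf{G})^2=\lambda_1(\mathbf{G}^{\top}\mathbf{G})\le(1+\delta')V(\beta+K\beta^2)$ and $\sigma_K(\mathbf{G})^2=\lambda_{\min}(\mathbf{G}^{\top}\mathbf{G})\ge(1-\delta')V\beta$. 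To control $\|\bm{\Delta}\|_2$, I first condition on the event $\{\max_{v,k}g_{vk}\le\beta+2\log(K/\delta_2)\}$, which holds with probability at least $1-\delta_2$ by the Gamma upper tail and a union bound; on this event every one of the $V$ independent summands entering an entry of $\bm{\Delta}$ is bounded by $\big(\beta+2\log(K/\delta_2)\big)^2$, and a Bernstein-type concentration — applied to $\bm{\Delta}$ written as a sum of $V$ independent, mean-zero, bounded $K\times K$ matrices (matrix Bernstein), or via an $\varepsilon$-net over the sphere in $\mathbb{R}^K$ — gives, on a further event of probability at least $1-\delta_3$, exactly $\|\bm{\Delta}\|_2\le\delta' V\beta$ with $\delta'$ as defined. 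A union bound over the three events produces the stated probability $1-\delta_1-\delta_2-\delta_3$, and chaining the inequalities from the three steps finishes both parts.

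The step I expect to be the main obstacle is the last one: obtaining a spectral-norm bound on $\bm{\Delta}$ that is small compared with $V\beta$ (not merely compared with $V(\beta+K\beta^2)$, which would be enough for $\sigma_1$ but not for $\sigma_K$). The off-diagonal entries of $\mathbf{G}^{\top}\mathbf{G}$ are correlated across rows and columns, and the heavy right tail of the Gamma distribution is what forces the truncation step and injects the $\big(\beta+2\log(K/\delta_2)\big)^2$ factor into $\delta'$; a naive estimate $\|\bm{\Delta}\|_2\le K\max_{j,k}|\bm{\Delta}_{jk}|$ combined with an entrywise Bernstein bound is generally too lossy when $K$ is large, so the matrix-Bernstein or net argument above is needed to recover the $\delta'$ stated in the theorem.
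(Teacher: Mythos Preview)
Your proposal is correct and follows essentially the same route as the paper: factor $\mathbf{M}_2=\mathbf{P}\mathbf{A}\mathbf{P}^\top$ to isolate the $\alpha$-prefactor, use the Gamma representation $\mathbf{P}=\mathbf{G}\mathbf{S}^{-1}$ of the Dirichlet topics, bound the normalizers $s_k\sim\mathrm{Gamma}(V\beta,1)$ by sub-gamma tails plus a union bound, and then control $\mathbf{G}^\top\mathbf{G}$ around its mean $V\beta\,\mathbf{I}_K+V\beta^2\,\mathbf{1}\mathbf{1}^\top$ via matrix concentration after truncating the entries. The only notable difference is the choice of concentration tool: the paper writes $\mathbf{G}^\top\mathbf{G}=\sum_v\mathbf{r}_v^\top\mathbf{r}_v$ as a sum of PSD rank-one matrices, bounds $\max_v\|\mathbf{r}_v\|^2\le K(\beta+c_1\sqrt{\beta})^2$, and applies the \emph{matrix Chernoff} bound to obtain the multiplicative $(1\pm\delta')$ factors directly, whereas you propose an additive bound $\|\bm{\Delta}\|_2\le\delta' V\beta$ via matrix Bernstein (or a net) together with Weyl; both choices recover the same $\delta'$ up to constants, so this is a matter of taste rather than substance.
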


\begin{proof}

We have $\mathbf{M}_2=\frac{1}{\alpha_0(\alpha_0+1)}\sum_{k=1}^{K}\alpha_k \bm{\mu}_k \otimes \bm{\mu}_k =\frac{1}{\alpha_0(\alpha_0+1)}\mathbf{O} \mathbf{A} \mathbf{O}^{\top}$,
 where $\mathbf{O}=(\bm{\mu}_1,\dots,\bm{\mu}_K)$ is a $V\times K$ matrix
  and $\mathbf{A}=\text{diag}(\alpha_1,\dots,\alpha_K)$ is a diagonal matrix. The  first $K$ singular values of $\mathbf{M}_2$ are also the first $K$ singular values of $\frac{1}{\alpha_0(\alpha_0+1)} \mathbf{A}^{\frac{1}{2}}\mathbf{O}^{\top}\mathbf{O}\mathbf{A}^{\frac{1}{2}}$. And we have
\begin{align*}
\sigma_1(\mathbf{A}^{\frac{1}{2}}\mathbf{O}^{\top}\mathbf{O}\mathbf{A}^{\frac{1}{2}}) \leq \sigma_1(\mathbf{A})\sigma_1(\mathbf{O}^{\top}\mathbf{O}),
\end{align*}
and
\begin{align*}
\sigma_K(\mathbf{A}^{\frac{1}{2}}\mathbf{O}^{\top}\mathbf{O}\mathbf{A}^{\frac{1}{2}}) \geq \sigma_K(\mathbf{A})\sigma_K(\mathbf{O}^{\top}\mathbf{O}).
\end{align*}

To estimate the singular value of $\mathbf{O}^{\top}\mathbf{O}$,
 we need to utilize the fact that $\bm{\mu}_k \sim \text{Dir}(\beta)$.
  The random variables in the same column of $\mathbf{O}$ are dependent with each other. Thus, powerful results from random matrix theory can not be applied.
 To decouple the dependency, we design a diagonal matrix $\mathbf{\Lambda}$,
  whose diagonal elements are drawn from $\text{Gamma}(V\beta ,1)$ independently.
   In this way, $\hat{\mathbf{O}}=\mathbf{O}\mathbf{\Lambda}$ is a matrix with independent elements,
    i.e., each element is an i.i.d. random variable following $\text{Gamma}(\beta,1)$.

We denote each row of $\hat{\mathbf{O}}$ as $\mathbf{r}_v,v=1,\dots,V$, then $\hat{\mathbf{O}}^{\top}\hat{\mathbf{O}}=\sum_{v=1}^{V}\mathbf{r}_v^{\top}\mathbf{r}_v$. In order to apply matrix Chernoff bound~\cite{Tropp2012matrix},
 we need to bound the spectral norm of $\mathbf{r}_v^{\top}\mathbf{r}_v$, i.e., $\max_v \{ \sigma_{1}(\mathbf{r}_v^{\top}\mathbf{r}_v)\}$.
  Because $\mathbf{r}_v^{\top}\mathbf{r}_v$ is a rank-$1$ matrix, we have $\sigma_{1}(\mathbf{r}_v^{\top}\mathbf{r}_v)=\mathbf{r}_v \mathbf{r}_v^{\top}$. By Lemma \ref{crl:SSGRV} (see Appendix) and the union bound, with probability greater than $1-KVe^{-\frac{c_1}{2}\min\{\frac{c_1}{2},\sqrt{\beta}\}}$, we have
$$
R=\max_{v=1,\ldots,V}\{ \sigma_{1}(\mathbf{r}_v^{\top}\mathbf{r}_v)\} \leq K(\beta+c_{1}\beta^{1/2})^2.
$$

We also have $\sigma_{1}(\mathbb{E}[\hat{\mathbf{O}}^{\top}\hat{\mathbf{O}}])=V\beta(1+K\beta)$ and $\sigma_{K}(\mathbb{E}[\hat{\mathbf{O}}^{\top}\hat{\mathbf{O}}])=V\beta$. Applying the matrix Chernoff bound to $\hat{\mathbf{O}}^{\top}\hat{\mathbf{O}}$, with probability greater than
\begin{align*}
1-KVe^{-\frac{c_1}{2}\min\{\frac{c_1}{2},\sqrt{\beta}\}}-K\left[\frac{e^{-\delta'}}{(1-\delta')^{1-\delta'}}\right]^{\frac{V\beta}{K(\beta+c_{1}\beta^{1/2})^2}},
\end{align*}
we have
\begin{align*}
\SVK{K}{\hat{\mathbf{O}}^{\top}\hat{\mathbf{O}}}\geq (1-\delta')V\beta.
\end{align*}

And with probability greater than
\begin{align*}
1-KVe^{-\frac{c_1}{2}\min\{\frac{c_1}{2},\sqrt{\beta}\}}-K\left[\frac{e^{\delta'}}{(1+\delta'})^{1+\delta'}\right]^{\frac{V\beta}{K(\beta+c_{1}\beta^{1/2})^2}},
\end{align*}
we have
\begin{align*}
\SVK{1}{\hat{\mathbf{O}}^{\top}\hat{\mathbf{O}}}\leq (1+\delta')V\beta(1+K\beta).
\end{align*}

By definition, for $i=1,\dots,K$, it follows
\begin{align*}
\sigma_i(\mathbf{M}_2)=\frac{1}{\alpha_0(\alpha_0+1)} \sigma_i(\mathbf{A}^{\frac{1}{2}}\mathbf{\Lambda}^{-1}\hat{\mathbf{O}}^{\top}\hat{\mathbf{O}}\mathbf{\Lambda}^{-1}\mathbf{A}^{\frac{1}{2}}).
\end{align*}
Therefore, we have
\begin{align*}
\sigma_{1}(\mathbf{M}_2) \leq \frac{\alpha_{\max}}{\alpha_0(\alpha_0+1)} \frac{\SVK{1}{\hat{\mathbf{O}}^{\top}\hat{\mathbf{O}}}}{\sigma^2_{K}(\mathbf{\Lambda})},
\end{align*}
and
\begin{align*}
\sigma_{K}(\mathbf{M}_2) \geq \frac{\alpha_{\min}}{\alpha_0(\alpha_0+1)} \frac{\SVK{1}{\hat{\mathbf{O}}^{\top}\hat{\mathbf{O}}}}{\sigma^2_{1}(\mathbf{\Lambda})}.
\end{align*}
Since $\SVK{1}{\mathbf{\Lambda}}$ and $\SVK{K}{\mathbf{\Lambda}}$ are the maximum and minimum of a set of random variables following $\text{Gamma}(V\beta,1)$, we can bound them by Lemma \ref{lem:GammaMaxMin} with coefficient $c_2$. Proper choices of coefficients $c_1,c_2,\delta'$ (provided in Appendix \ref{apd:mul}) leads to the conclusions of Theorem \ref{thm:mul}.
\end{proof}

With certain assumptions on $\alpha_{\max}$ and $\alpha_{\min}$, we can fully utilize the bounds above.
 If we assume that $\alpha_{k}=\Theta(\frac{1}{K}\sum_i \alpha_i)=\Theta(1)$, $\forall k$,
  then $\frac{\alpha_{\min}}{\alpha_0}=\Theta(\frac{1}{K})$ and $\alpha_0=\Theta(K)$.
   Therefore, $\underline{\sigma_K}$ decreases rapidly as $K$ increases, where $\sigma_K(\mathbf{M}_2) \propto \frac{1}{K^2}$ approximately. {This} fact leads to increasing difficulty in distinguishing the topics with small singular values from noise. Note that $\overline{\sigma_1}$ also decreases with a slower rate  as $K$ increases.

\subsection{Analysis of the Number of Topics}\label{sec:lda:sum}

\begin{figure*}[!ht]
\begin{center}
\begin{tabular}{ccc}
\includegraphics[scale=0.35]{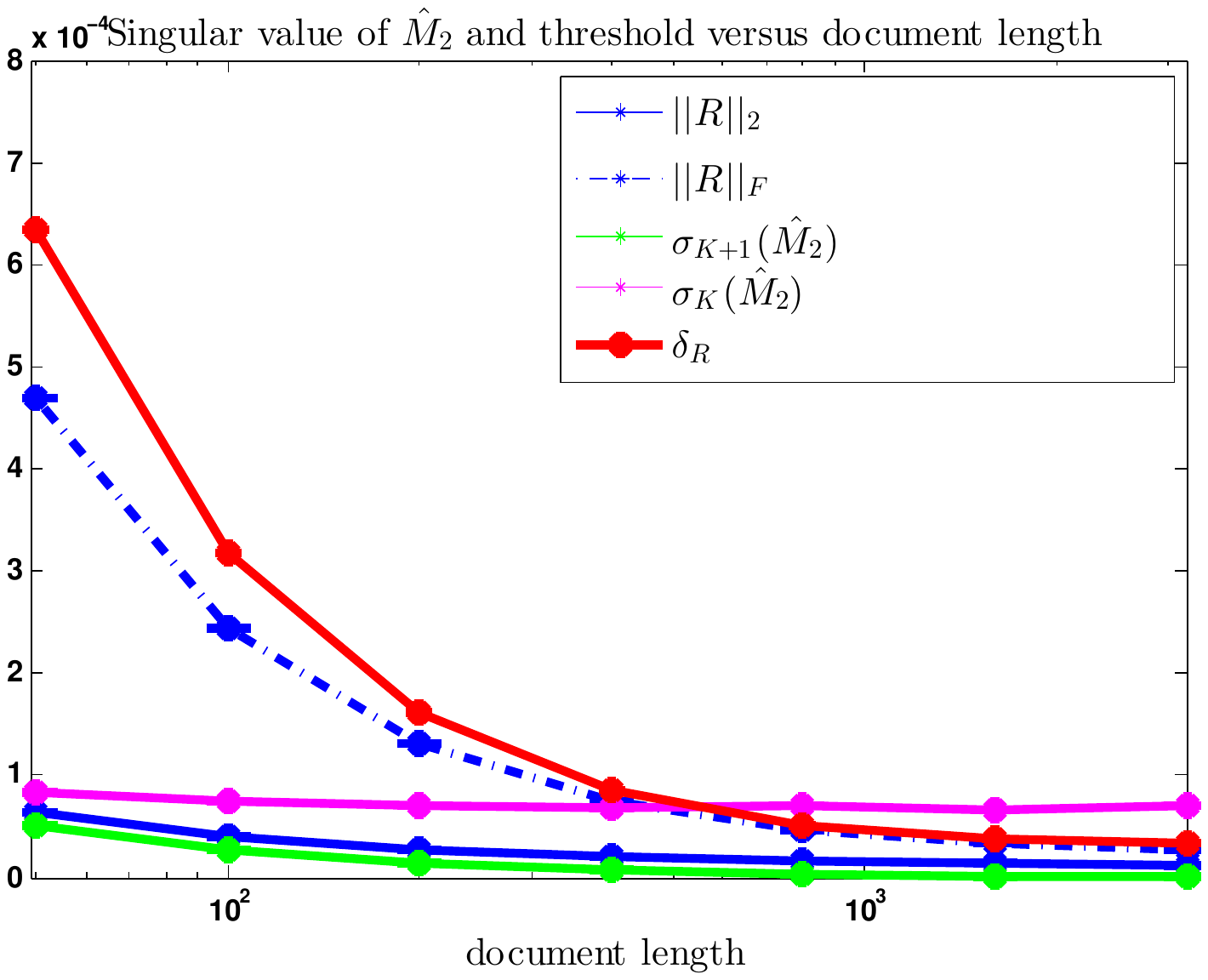}&
\includegraphics[scale=0.35]{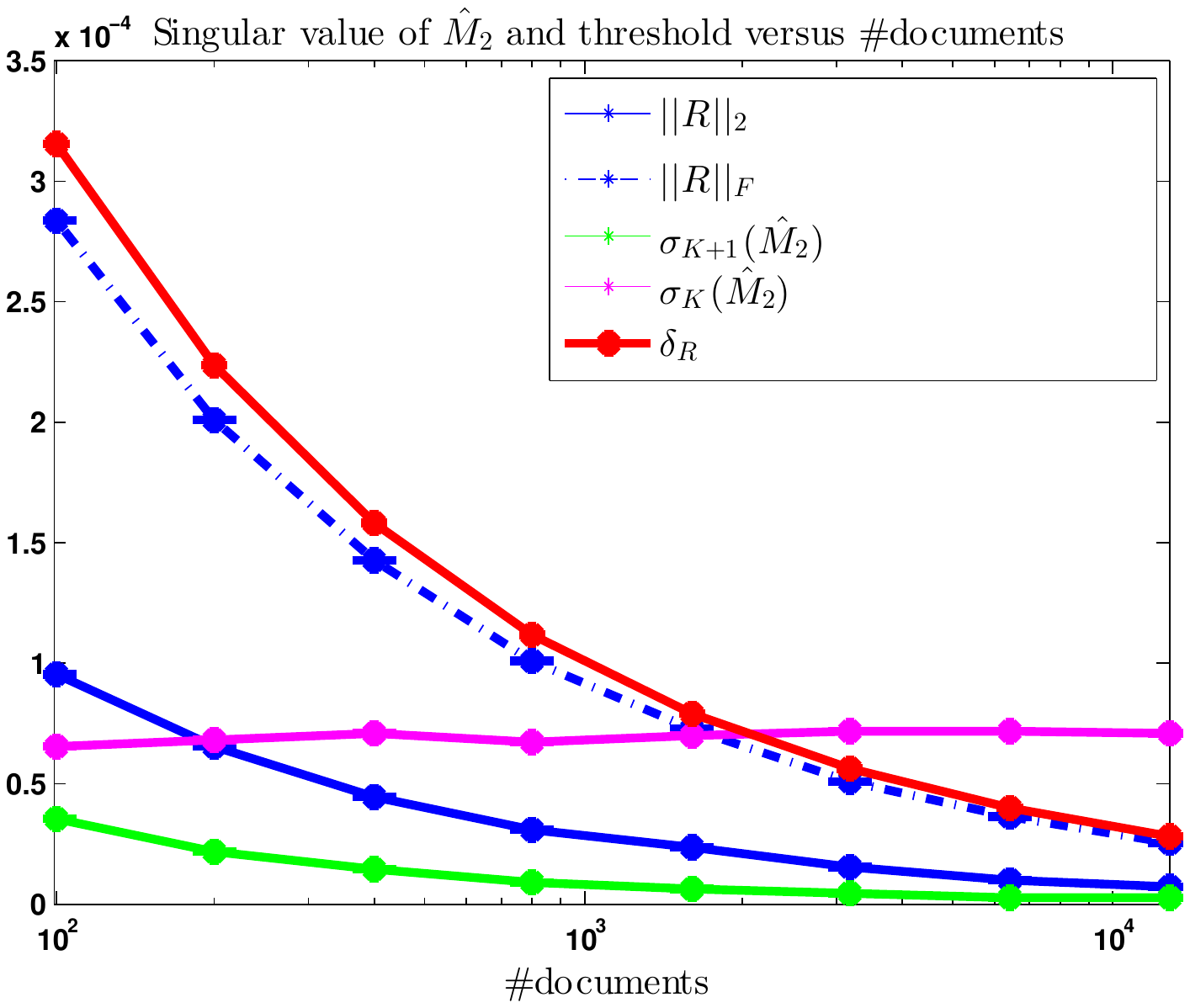}&
\includegraphics[scale=0.35]{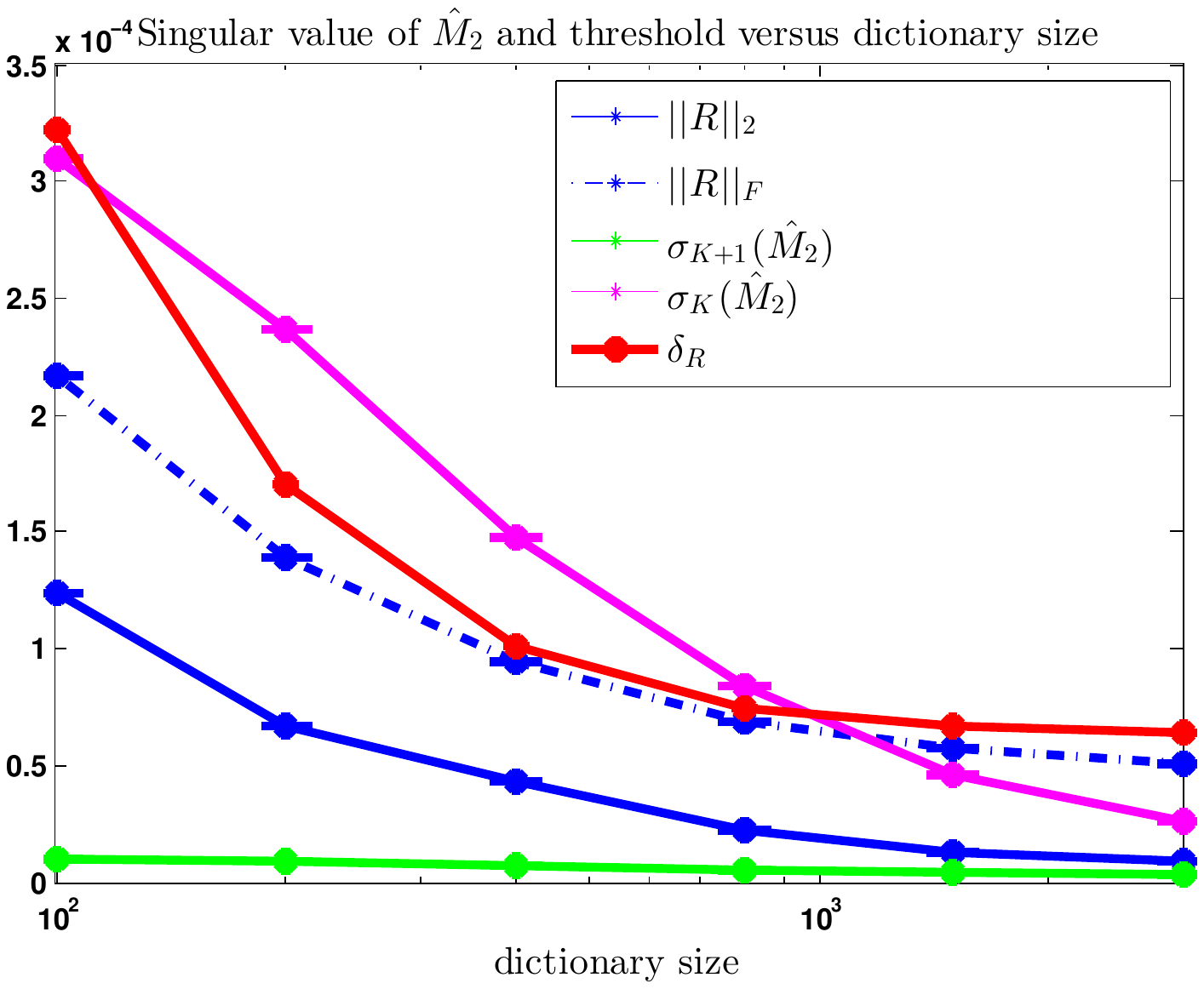}\\
(a) & (b) & (c)\\
\includegraphics[scale=0.35]{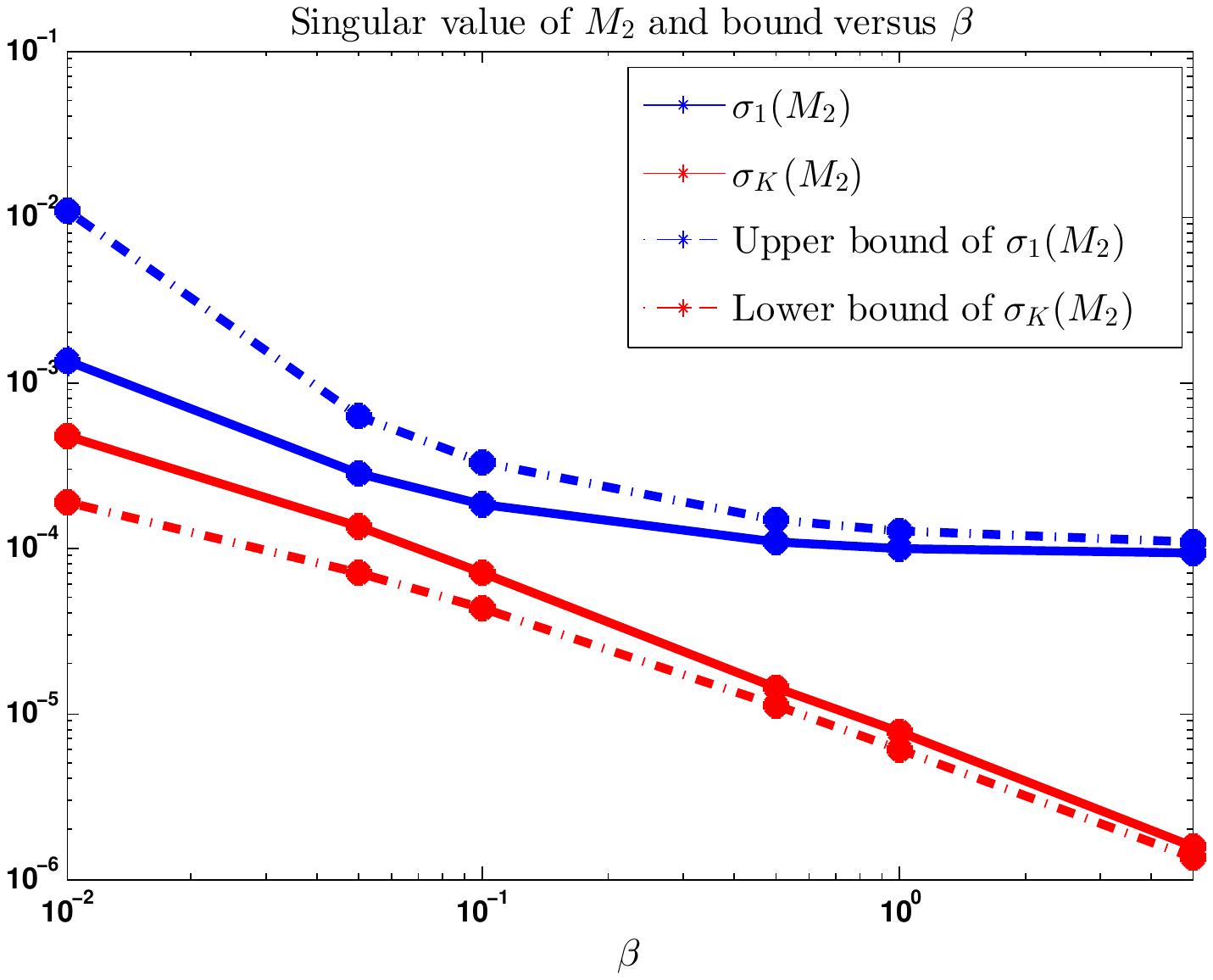}&
\includegraphics[scale=0.35]{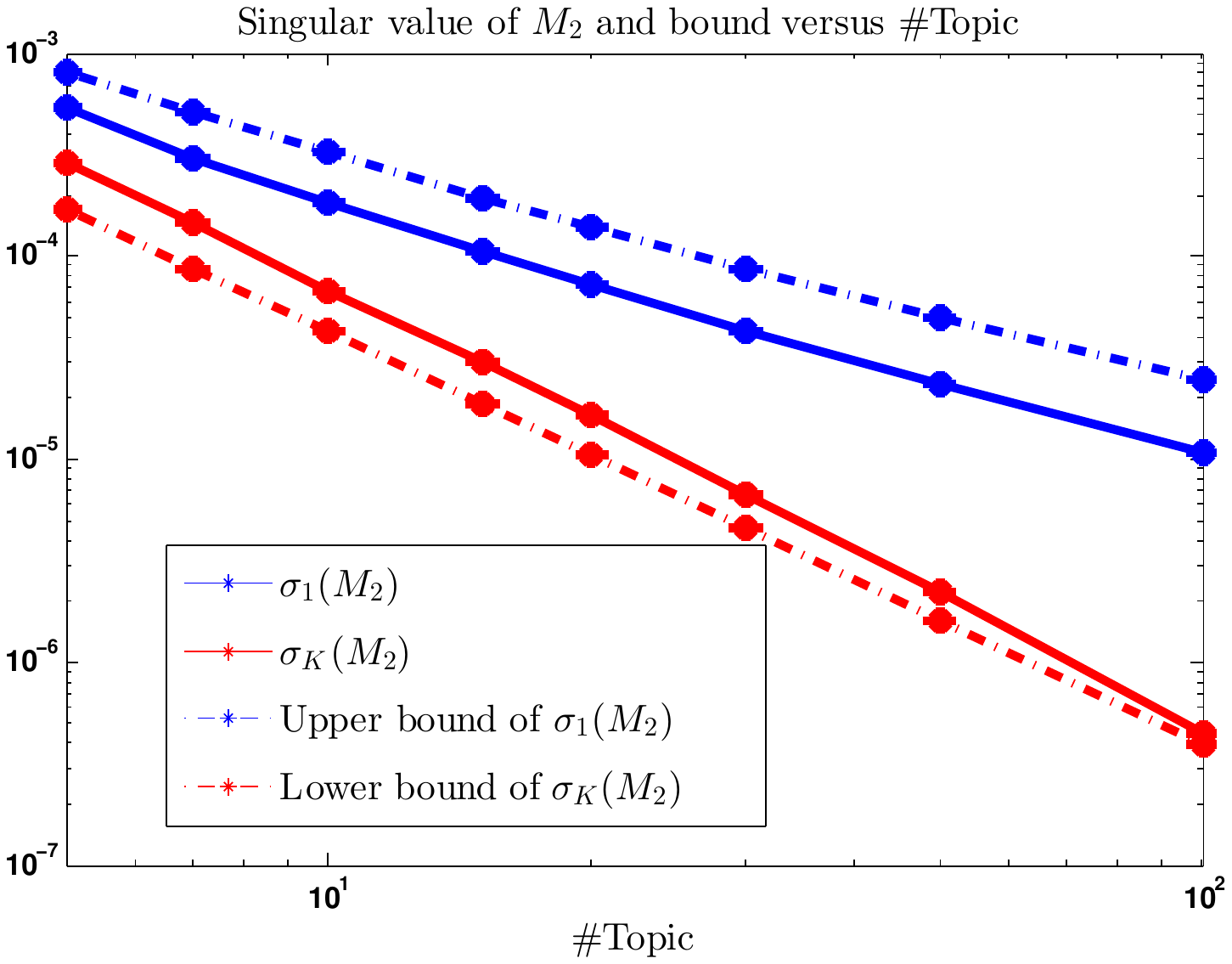}&
\includegraphics[scale=0.35]{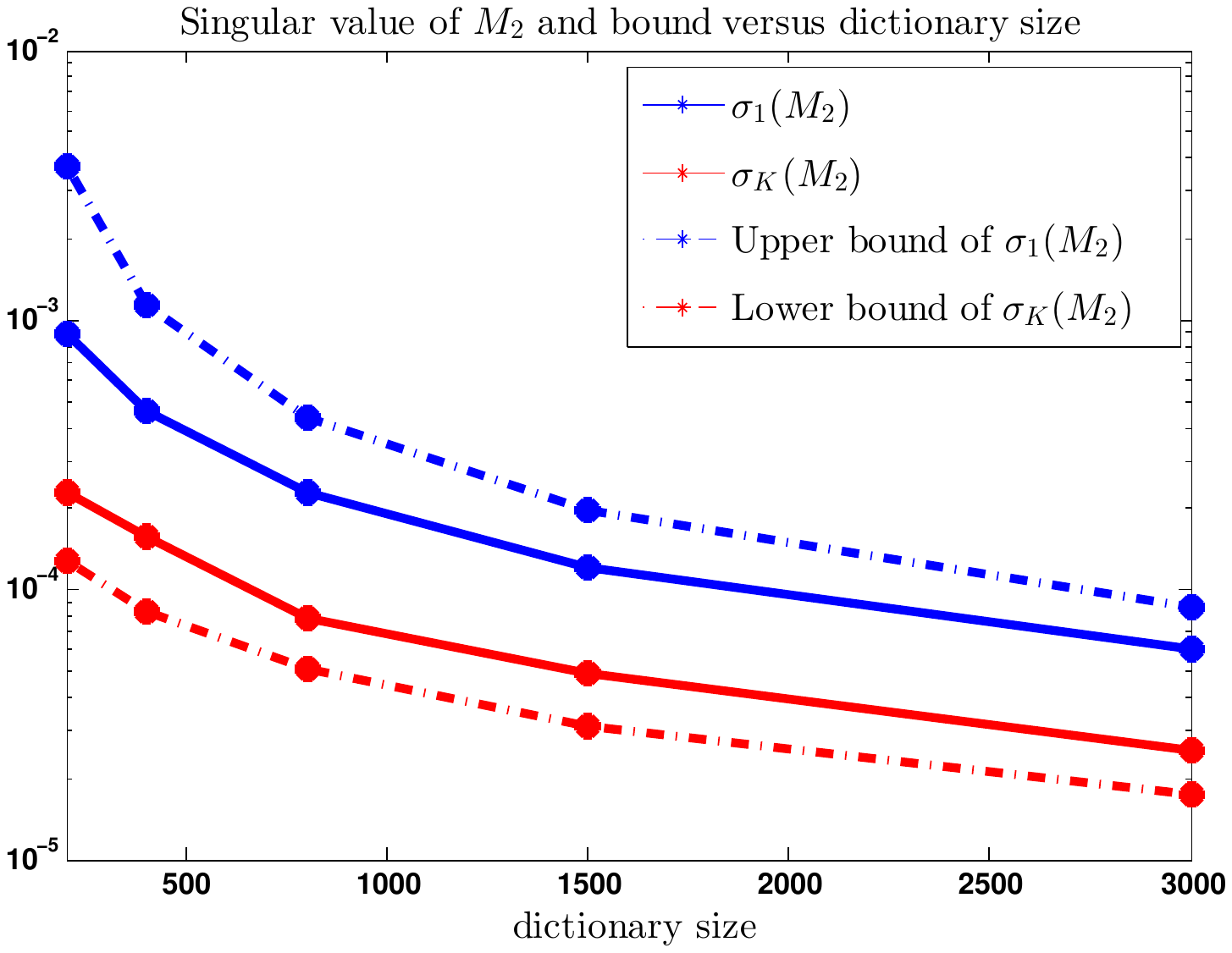}\\
(d) & (e) & (f)\\
\end{tabular}
\end{center}
\caption{Experimental results on synthetic data under LDA model. Results on $\deltaR$ are illustrated in Figure (a-c). $\underline{\sigma_K}$ and $\overline{\sigma_1}$ are illustrated in Figure (d-f).}\label{fig:lda}
\end{figure*}

The convergence of $\hat{\mathbf{M}}_2$ and the spectral structure of $\mathbf{M}_2$ provide us the upper bounds and the lower bounds on the singular values of the empirical second-order moments $\hat{\mathbf{M}}_2$.
 We can infer the number of topics by
 the following steps:

First, by setting $\theta > \deltaR$, thresholding provides a lower bound on $K$, since with high probability, every spurious topic has singular value smaller than $\deltaR$.\footnote{Strictly speaking, there is no one-to-one correspondence between topics and the singular values of the second-order moments. Here we refer to the correspondence in terms of the total number of topics.}

Secondly, if we set $\theta <  \underline{\sigma_K} - \deltaR$,
thresholding provides a upper bound on $K$, since with high probability, every true topic has singular value greater than the threshold. However, the above threshold is not computable,
since $\underline{\sigma_K}$ depends on the true number of topics $K$.

Instead, we can directly utilize the upper bound $\overline{\sigma_1}$ on $\sigma_1(\hat{\mathbf{M}}_2)$ to provide an upper bound for $K$.
 We have $\sigma_1(\hat{\mathbf{M}}_2) \leq \overline{\sigma_1}+ \delta_{\mathbf{R}} \label{eq:lda:upperK}$ as shown in Theorem~\ref{thm:mul}.
  The left hand side, $\sigma_1(\hat{\mathbf{M}}_2)$, is determined by the observed corpus, and the right hand side $\overline{\sigma_1}+ \delta_{\mathbf{R}}$ is a function of $K$.
   When $\overline{\sigma_1}+ \delta_{\mathbf{R}}$ decreases as $K$ increases (see discussion in Section~\ref{sec:spstr}), solving the inequality leads to an upper bound on $K$.

\section{Experimental Results}\label{sec:exp}
We validate our theoretical results by conducting experiments on the synthetic datasets generated according to the LDA model. For each experiment setting, we report the results by averaging over five random runs.

In the first set of experiments, we test the convergence 
{of the} second-order moment $\mathbf{\hat{M}}_2$ {as a function of $\deltaR$}. The parameter setting is as follows: $K=10$, $\forall k, \alpha_k=1$ and $\forall v, \beta_v=0.1$. We vary the dictionary size $V$, document length $L$, or document number $D$ while keeping the other two fixed. The detailed {settings are} summarized as belows:
\begin{enumerate}[(a)]
\item Fix $D=2000$ and $V=1000$, vary the length of document $L$ from $50$ to $3200$.
\item Fix $L=500$ and $V=1000$, vary  the number of documents $D$ from $100$ to $12800$.
\item Fix $L=500$ and $D=2000$, vary  the size of dictionary $V$ from $100$ to $3000$.
\end{enumerate}

Figure~\ref{fig:lda} (a-c) shows the matrix norms on $\RM = \mathbf{\hat{M}}_2-\mathbf{{M}}_2$ and the $K$-th and $(K+1)$-th largest singular values of $\mathbf{\hat{M}}_2$. The results match nicely with our theoretical analysis in that $\deltaR$ serves as an accurate upper bound on the Frobenius norm of $\hat{\mathbf{M}}_2-\mathbf{M}_2$. When the amount of data is {large} enough, the red line goes below the purple line, which indicates that with enough data, thresholding with $\deltaR$ provides a tight lower bound on the number of topics.

In the second experiment, we evaluate our bounds on the spectral structure of $\mathbf{M}_2$ in Theorem~\ref{thm:mul}. Similarly, we vary $K, \beta$, or $V$ while keeping the other two parameters fixed. The detailed {settings are} as follows:
\begin{enumerate}[(a)]
\setcounter{enumi}{3}
\item Fix $\alpha_k=1$, $V=1000$, and $K=10$, 		 vary $\beta_v=\beta$ from $0.01$ to $5$.
\item Fix $\alpha_k=1$, $V=1000$, and $\beta_v=0.1$, vary number of topics $K$ from $5$ to $100$.
\item Fix $\alpha_k=1$, $K=10$  , and $\beta_v=0.1$, vary the size of dictionary $V$ from $200$ to $3000$.
\end{enumerate}
The results in Figure \ref{fig:lda} (d-f) match well with our theoretical analysis.

In the last experiment, we calculate the upper bound and the lower bound of $K$ when varying the number of documents {or the length of documents}. The results are presented in Figure \ref{fig:lda:k}. As we can see, the lower bound indeed converges to the true number of topics. However, the upper bound converges to a value other than the ground truth, partly because the upper bound involves both $\overline{\sigma_1}$ and $\deltaR$, whereas $\overline{\sigma_1}$ does not change as the size of dataset increases. The experiment results demonstrate that our upper and lower bounds on $K$ can effectively narrow down the range of possible $K$.
\begin{figure*}[ht]
\begin{center}
\begin{tabular}{ccc}
\includegraphics[scale=0.35]{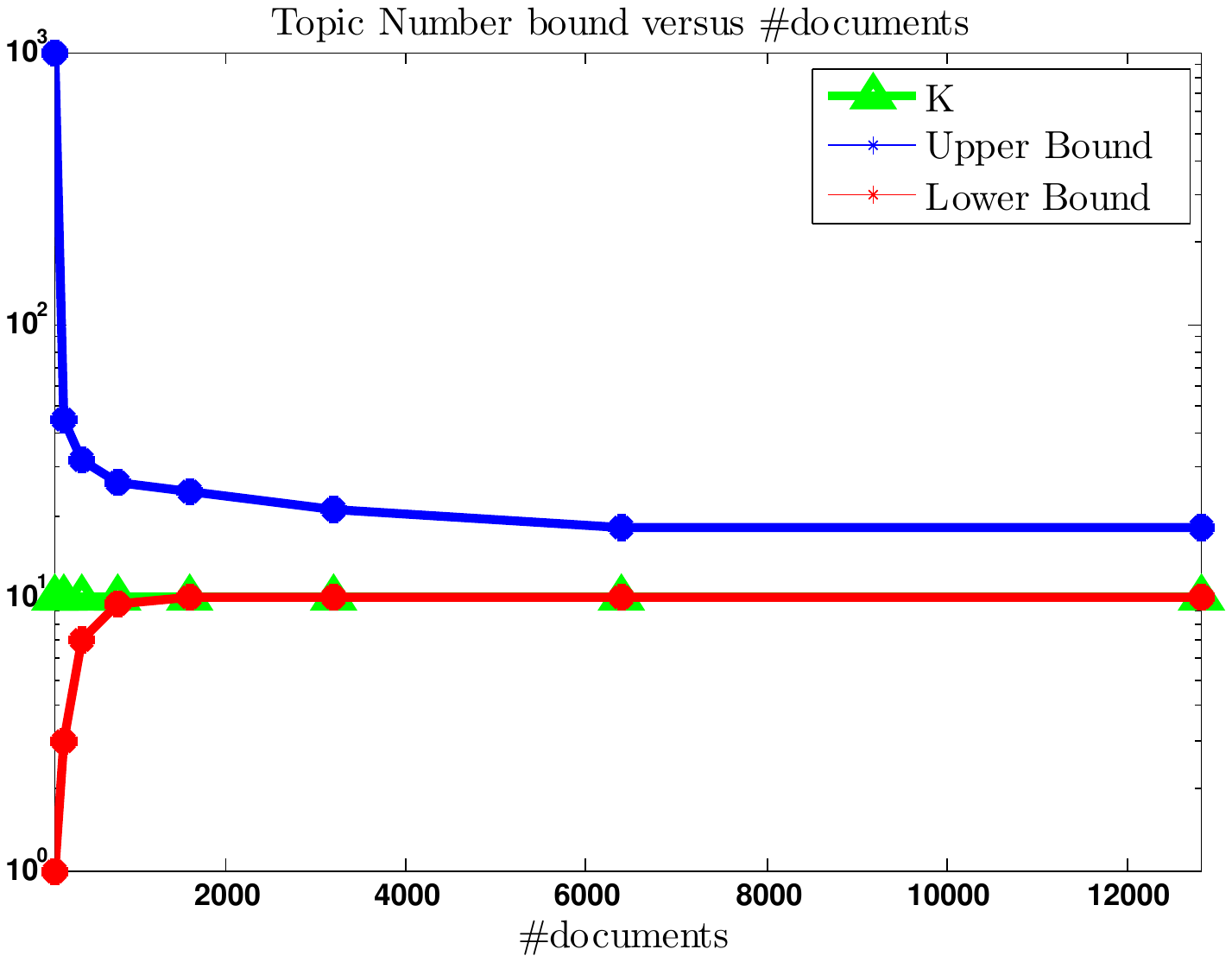}&
\includegraphics[scale=0.35]{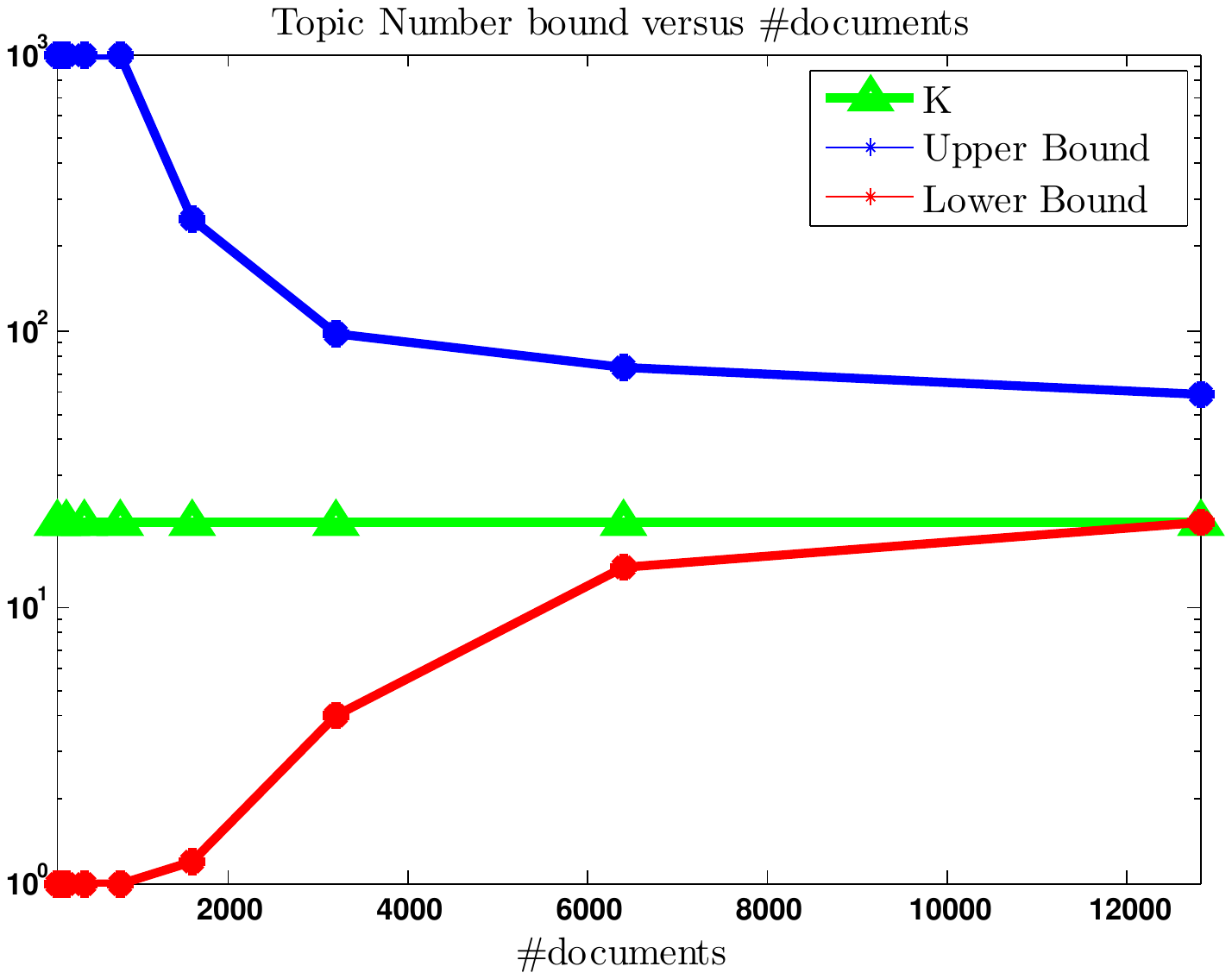}&
\includegraphics[scale=0.35]{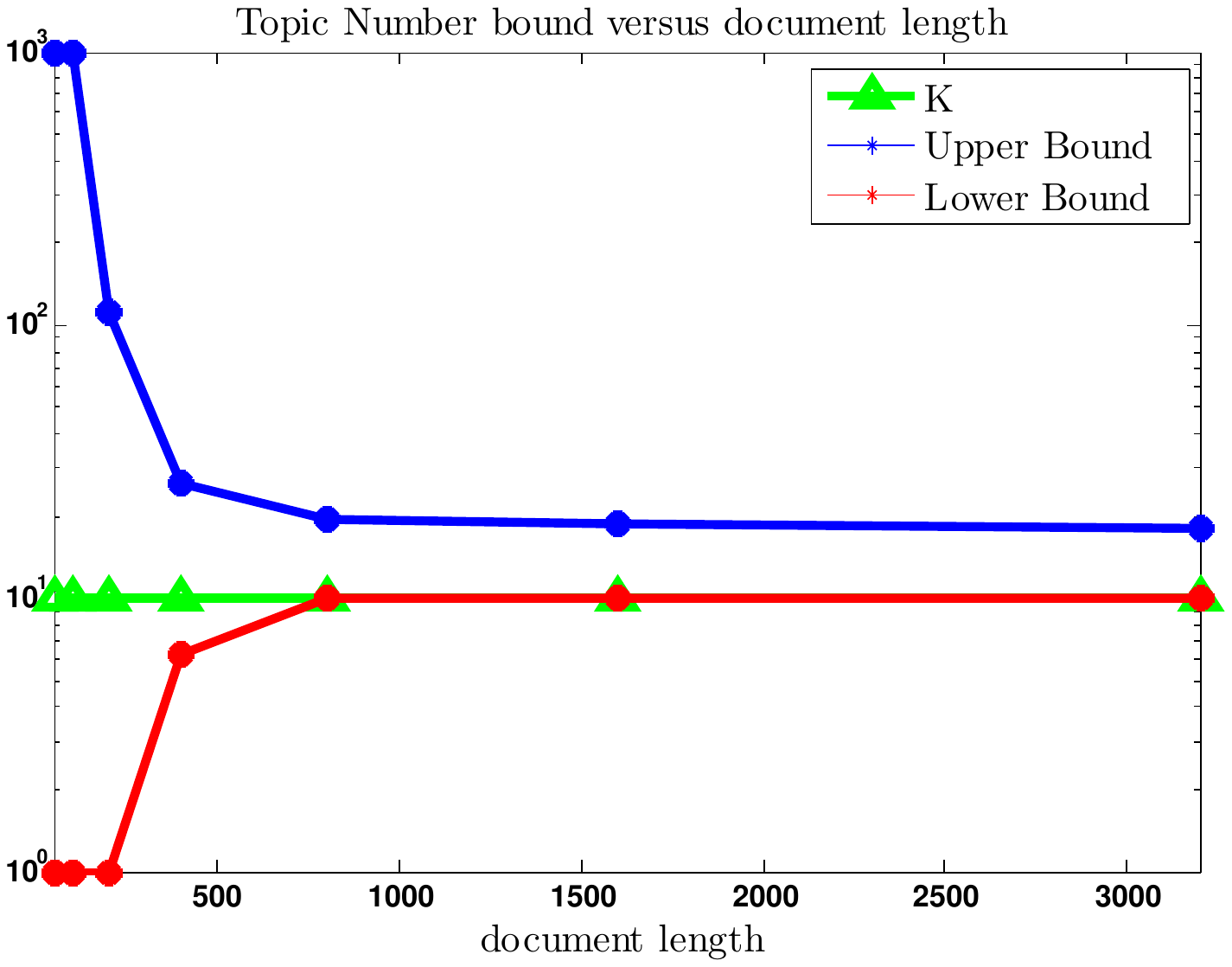}\\
(a)$K=10,L=500$, varying $D$ & (b)$K=20,L=500$, varying $D$  & (c)$K=10,D=2000$, varying $L$\\
\end{tabular}
\end{center}
\caption{The upper and lower bounds on number of topics for LDA based on discussion in section \ref{sec:lda:sum}.}\label{fig:lda:k}
\end{figure*}

\section{Discussion and Conclusions}\label{sec:gen}
So far we have shown that for the LDA model, by investigating the convergence  of the empirical moments $\hat{\mathbf{M}}_2$ and the spectral structure of the expected moment $\mathbf{M}_2$, the singular value of the empirical moment provides useful information on the number of topics. This line of research provides an interesting direction for analyzing mixture models in general \cite{hsu2013learning}. Next we show how to generalize our methodology with an example of Gaussian Mixture Models (GMM).

\subsection{Generalization}
Our analysis can be easily generalized to other mixture models whose empirical low-order moments have the same structures as the weighted sum of the outer products of mixture components. Convergence analysis of $\deltaR$ leads to the lower bound on the number of mixture components, while solving inequality on the first singular value $\sigma_1(\hat{\mathbf{M}}_2)$ provides an upper bound. In order to derive the convergence bound $\delta_{\mathbf{R}}$, the variance of $\mathbf{R}_{ij}$ need to be computed. Moreover, we need to explore the spectral structure of the true moment to provide upper and lower bound on the first and the $K$-th singular values respectively.
 
 As an example, we  next show how to conduct the analysis on the \emph{Gaussian Mixture Model}~\cite{Bishop2006PRML} with spherical mixture components.

GMM assumes that the data points are generated from a mixture of multivariate Gaussian components.
That is, for a dataset $\{\mathbf{x}_i\}_{i=1}^N$ generated from \emph{spherical Gaussian mixtures} with $K$ components, we assume that
\begin{align*}
h_i \sim& \text{Multi}(w_1,w_2,\dots , w_K), \\
 \mathbf{x}_i \sim& \mathcal{N}(\bm{\mu}_{h_i},\sigma^2 \mathbf{I}), \\
  i=&1,2,\dots,N
\end{align*}
where $(w_1,w_2,\dots , w_K)$ is the mixture probability, $h_i$ is the component assignment for the $i$-th data point, and $\mathcal{N}(\bm{\mu},\sigma^2 \mathbf{I})$ is a $m$-dimensional spherical Gaussian distribution with $M\geq K$. We further assume $\bm{\mu}_i \sim \mathcal{N}(\mathbf{0},\sigma_{\mu}^2 \mathbf{I})$ and $(w_1,w_2,\dots , w_K) \sim \text{Dir}(\alpha_1,\alpha_2,\dots,\alpha_K)$ for a Bayesian version of GMM. Note that we assume that the following parameters are known: $\sigma,\sigma_{\mu},\alpha_k,k=1,2,\dots,K$.

The problem on how to correctly choosing the number of mixture components has been extensively studied. Such as traditional methods (cross validation, AIC and BIC~\cite{Lukociene2010GMM}), penalized likelihood methods~\cite{Huang2013GMMPMLE} and variational approaches~\cite{Corduneanu2001GMMVariational}. Similar to the LDA model, we show that analyzing the empirical moments provides an alternative approach to bound the number of mixture components.

We define the empirical second-order moment as $\SecMH= \frac{1}{N}\sum_{i=1}^N \mathbf{x}_i \otimes \mathbf{x}_i - \sigma^2 \mathbf{I}$ and the second-order moment $\SecM$ as the expectation of the empirical moment, namely $\SecM=\Expect{\SecMH}$.
Then by similar analysis, we have the following theorem for GMM:

\begin{theorem}\label{thm:gmm:final}
Let $\alpha_k=\alpha, \forall k$, then
\begin{enumerate}[(1)]
\item Let $K_{l}$ be the number of singular values of $\SecMH$ such that $\sigma(\SecMH)>\deltaR$, where
$$
\deltaR = \frac{\sigma m}{\sqrt{ N\delta}}\sqrt{2\sigma_{\mu}^2+\frac{m+1}{m}\sigma^2},
$$
then with probability at least $1-\delta$, we have
\begin{align*}
K \geq K_l.
\end{align*}
\item Let $K_{u}$ be the maximal integer such that
\begin{align*}
\sigma_1 &(\hat{\mathbf{M}}_2) \\
 \leq   & \frac{\sigma_\mu^2}{K_u}\frac{\pq{\alpha+2\log(K_u/\delta_1)}\pq{(\sqrt{m}+\sqrt{K_u}+t)^2}}{{\max\{0_+, \alpha-\sqrt{2\alpha\log(1/\delta_2)/K_u}}\}}  + \delta_{\mathbf{R}}.
\end{align*}
Then with probability at least $1-\delta_1-\delta_2$, we have
\begin{align*}
K \leq K_u.
\end{align*}
\end{enumerate}
\end{theorem}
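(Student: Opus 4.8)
The plan is to follow the two-part template already used for LDA: part~(1) is the GMM analogue of Theorem~\ref{thm:lda:noise} and Lemmas~\ref{lem:lda:ub}--\ref{lem:lda:var} (concentration of $\SecMH$ around the population moment $\SecM$), part~(2) is the analogue of Theorem~\ref{thm:mul} (spectral structure of $\SecM$), and the two are combined through Weyl's inequality exactly as in Section~\ref{sec:lda:sum}. Throughout, $\SecM$ should be read as the conditional expectation of $\SecMH$ given the component means $\{\bm\mu_k\}$ and the assignments $\{h_i\}$, so that $\SecMH$ is an unbiased estimator of it and $\SecM$ agrees in the large-$N$ limit with $\sum_k w_k\,\bm\mu_k\otimes\bm\mu_k$, whose rank is at most $K$.

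For part~(1) I would set $\RM=\SecMH-\SecM$ and reuse the chain $\max_i|\SVK{i}{\SecMH}-\SVK{i}{\SecM}|\le\TNorm{\RM}\le\FNorm{\RM}$ (the first step is Lemma~\ref{lem:eigsvd}, using that $\SecMH$ is symmetric and the population moment is PSD). The only genuinely new work is $\Expect{\FNorm{\RM}^2}=\sum_{i,j}\mathrm{Var}(\RM_{ij})$: since $\RM_{ij}=\tfrac1N\sum_n(x_{ni}x_{nj}-\ExpectC{x_{ni}x_{nj}}{h_n,\bm\mu})$, one has $\mathrm{Var}(\RM_{ij})=\tfrac1N\Expect{\mathrm{Var}(x_ix_j\mid h,\bm\mu)}$, and writing $x=\bm\mu_h+z$ with $z\sim\MG{\mathbf 0}{\sigma^2\mathbf I}$ and using $\Expect{z_i^2}=\sigma^2$, $\Expect{z_i^4}=3\sigma^4$, $\Expect{\mu_{k,i}^2}=\sigma_\mu^2$, a short computation gives the conditional variance $\mu_{h,i}^2\sigma^2+\mu_{h,j}^2\sigma^2+\sigma^4$ off-diagonal and $4\mu_{h,i}^2\sigma^2+2\sigma^4$ on the diagonal; averaging over $h,\bm\mu$ and summing over the $m^2$ entries produces (after keeping dominant terms, in the same spirit as the LDA bound) $\Expect{\FNorm{\RM}^2}\lesssim\frac{\sigma^2m^2}{N}\pq{2\sigma_\mu^2+\tfrac{m+1}{m}\sigma^2}$. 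Markov's inequality with $t=1/\delta$ then gives $\FNorm{\RM}\le\deltaR$ with probability $1-\delta$, and since $\SVK{i}{\SecM}=0$ for $i\ge K+1$, every singular value of $\SecMH$ beyond the $K$-th is at most $\deltaR$, i.e.\ $K_l\le K$.

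For part~(2) I would write $\SecM=\mathbf O\mathbf W\mathbf O^{\top}$ with $\mathbf O=(\bm\mu_1,\dots,\bm\mu_K)\in\mathbb R^{m\times K}$ and $\mathbf W=\mathrm{diag}(w_1,\dots,w_K)$; the nonzero singular values of $\SecM$ are those of $\mathbf W^{1/2}\mathbf O^{\top}\mathbf O\mathbf W^{1/2}$, so $\SVK{1}{\SecM}\le\SVK{1}{\mathbf W}\,\SVK{1}{\mathbf O}^2=w_{\max}\,\SVK{1}{\mathbf O}^2$. Unlike the LDA case, the entries of $\mathbf O$ are genuinely i.i.d.\ $\MG{0}{\sigma_\mu^2}$, so no decoupling trick is needed: the standard concentration bound for the top singular value of a Gaussian matrix gives $\SVK{1}{\mathbf O}\le\sigma_\mu(\sqrt m+\sqrt K+t)$ with probability $\ge1-2e^{-t^2/2}$. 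For $w_{\max}$ I would use the Dirichlet-as-normalized-Gamma representation $w_k=g_k/\sum_j g_j$, $g_k\sim\mathrm{Gamma}(\alpha,1)$ i.i.d.: a union bound over the Gamma upper tail gives $\max_k g_k\le\alpha+2\log(K/\delta_1)$ with probability $\ge1-\delta_1$ (lossy but clean), while a Chernoff/sub-Gamma bound on $\sum_j g_j\sim\mathrm{Gamma}(K\alpha,1)$ gives $\sum_j g_j\ge K\max\{0_+,\alpha-\sqrt{2\alpha\log(1/\delta_2)/K}\}$ with probability $\ge1-\delta_2$ --- the GMM counterparts of the Gamma max/min bounds in Appendix~\ref{apd:mul}. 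Multiplying these three estimates gives $\SVK{1}{\SecM}\le\frac{\sigma_\mu^2}{K}\frac{(\alpha+2\log(K/\delta_1))(\sqrt m+\sqrt K+t)^2}{\max\{0_+,\alpha-\sqrt{2\alpha\log(1/\delta_2)/K}\}}=:\overline{\sigma_1}(K)$, the $2e^{-t^2/2}$ term being absorbed by treating $t$ as a fixed tuning constant so that the stated confidence is $1-\delta_1-\delta_2$. Finally, Weyl's inequality plus part~(1) give $\SVK{1}{\SecMH}\le\SVK{1}{\SecM}+\deltaR\le\overline{\sigma_1}(K)+\deltaR$; since $\overline{\sigma_1}(K)+\deltaR$ is decreasing in $K$ in the relevant regime $m\gtrsim K$ (the same discussion as in Section~\ref{sec:spstr}), the true $K$ lies below the largest integer $K_u$ for which the displayed inequality still holds, so $K\le K_u$.

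The hard part will be the two computations that are not off-the-shelf: (a) getting the per-entry variances of $\RM_{ij}$ right, where one must be careful about conditioning on $(h,\bm\mu)$ and about which $z$-cross-terms survive (a careless treatment misplaces the $\sigma_\mu^4$ and $\sigma^4$ contributions); and (b) choosing the Gamma-tail bound so that $\max_k g_k$ is controlled by the clean, parameter-only quantity $\alpha+2\log(K/\delta_1)$ and the $K$-dependence lines up to give $\overline{\sigma_1}$ in exactly the stated form. Everything else --- Weyl, the Gaussian-matrix singular-value bound, and the monotonicity argument that converts the inequality into an upper bound on $K$ --- is routine or already established for LDA.
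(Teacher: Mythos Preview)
Your plan matches the paper's proof closely. Part~(1) is exactly the paper's Theorem~\ref{thm:gmm:noise}: bound $\FNorm{\RM}$ via the per-entry variances and Markov; Part~(2) is exactly Theorem~\ref{thm:gmm:spb}: factor $\SecM=\mathbf O\mathbf A\mathbf O^{\top}$, use the standard Gaussian-matrix bound $\SVK{1}{\mathbf O}\le\sigma_\mu(\sqrt m+\sqrt K+t)$, and control $w_{\max}$ via the normalized-Gamma representation of the Dirichlet (Appendix~\ref{apd:sec:dir}). Weyl plus the monotonicity of $\overline{\sigma_1}(K)+\deltaR$ finishes both directions, just as you describe.

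The one place you diverge is in defining $\SecM$ as the conditional expectation given $(\bm\mu,\{h_i\})$ rather than given $(\bm\mu,\mathbf w)$ as the paper does. Your choice makes the per-entry variance cleaner (only the Gaussian noise $z$ contributes), but then the matrix you concentrate $\SecMH$ around in Part~(1) is $\sum_k\hat w_k\,\bm\mu_k\bm\mu_k^{\top}$ with \emph{empirical} weights, whereas in Part~(2) you analyse $\sum_k w_k\,\bm\mu_k\bm\mu_k^{\top}$ with \emph{population} weights. Your ``large-$N$ limit'' remark acknowledges but does not close this mismatch; strictly speaking the two parts bound singular values of different matrices. The fix is to condition on $(\bm\mu,\mathbf w)$ throughout, as the paper does, at the price of an extra $\mathrm{Var}_h(\mathbb E[x_ix_j\mid h,\bm\mu])$ term in the variance; after averaging over $\bm\mu$ this term is $O(\sigma_\mu^4)$ and the paper's stated $\deltaR$ keeps only the dominant $\sigma^2\sigma_\mu^2$ and $\sigma^4$ contributions. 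With that bookkeeping adjustment your argument is identical to the paper's.
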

The proof for Theorem~\ref{thm:gmm:final} is similar to that of Theorem~\ref{thm:mul} where detailed proof is in Appendix~\ref{apd:gmm} due to space limit. As our purpose is methodology demonstration, we omit the comparison with the excellent existing works on GMM, such as \cite{steele2009performance}.


\subsection{Conclusion}

In this paper, we provide theoretical analysis for model selection in LDA. Specifically, we present both an upper bound and a lower bound on the number of topics $K$  based on the connection between second-order moments and latent topics.
The upper bound is obtained by bounding the difference between the estimated second-order moment $\hat{\mathbf{M}}_2$ and the true moment $\mathbf{M}_2$. The lower bound is obtained via analyzing the largest singular value of $\hat{\mathbf{M}}_2$. Furthermore, our analysis can be easily generalized to other latent models, such as Gaussian mixture models.

One major limitation of our approach is that all our analysis assumes that the data are generated exactly according to LDA. As a result, the analysis result may not hold when being applied to real world dataset.

For future work, we will examine effective ways to improve the theoretical results. For example, by bounding higher-order moments of $\hat{\mathbf{M}}_2-\mathbf{M}_2$ or replacing Markov inequality with tighter inequalities. Moreover, we could bound the spectral norm of $\hat{\mathbf{M}}_2-\mathbf{M}_2$ directly instead of its Frobenius norm, which potentially yields tighter bounds. 

\section{Acknowledgment}

We thank Fei Sha and David Kale for helpful discussion and suggestion. The research was
sponsored by the NSF research grants IIS-1254206, and U.S. Defense Advanced Research Projects Agency (DARPA) under Social Media in Strategic Communication (SMISC) program, Agreement Number W911NF-12-1-0034. The views and conclusions are those of the authors
and should not be interpreted as representing the official
policies of the funding agency, or the U.S. Government.

\bibliographystyle{alpha}
\bibliography{nips2013_V1}

\newpage\null\thispagestyle{empty}\newpage
\appendix
\section{Theoretical results for LDA}
\subsection{Coefficient Setting for Theorem \ref{thm:mul}}\label{apd:mul}

\subsubsection*{Bound of $\sigma_{1}(M_2)$}

We have that with probability greater than
\begin{align*}
1-&Ke^{-\frac{c^2_2}{2}}\\
-&KVe^{-\frac{c_1}{2}\min\{\frac{c_1}{2},\sqrt{\beta}\}}\\
-&K[\frac{e^{\delta'}}{(1+\delta')^{1+\delta'}}]^{\frac{V\beta}{K(\beta+c_{1}\beta^{1/2})^2}},
\end{align*}
we have
\begin{align*}
\sigma_{1}(M_2) \leq  \frac{1}{K(K\alpha+1)} \frac{(1+\delta')V(\beta+K\beta^2)}{(V\beta-c_{2}\sqrt{V\beta})^{2}}.
\end{align*}

We can choose $c_1,c_2$ and $\delta'$ as follows to simplify the formula of the bound
\begin{itemize}
\item Choose $c_{2}=\sqrt{2\log(K/\delta_1)}$, first probability term is less than $\delta_1$.
\item Choose $c_{1}=\frac{2}{\sqrt{\beta}}\log(KV/\delta_2)$, third probability term is less than $\delta_2$.
\item Choose $\delta'$ as
\begin{align*}
  \delta'=& \left(\frac{\log(K/\delta_3)K\pq{\beta+2\log\pq{K/\delta_2}}^2}{V\beta}\right)^{\frac{1}{2}},
\end{align*}
second probability term is less than $\delta_3$.
 \end{itemize}
 As a result, with probability greater than $ 1-\delta_1-\delta_2 -\delta_3
$, we have
\begin{align*}
\sigma_1(M_2) 
\leq \frac{1}{K(K\alpha+1)} \frac{(1+\delta')V(\beta+K\beta^2)}{(V\beta-\sqrt{2V\beta\log(K/\delta_1)})^{2}}.
\end{align*}

As an alternative, we can choose $c_1,c_2$ and $\delta_1$ as follows to simplify the formula of the bound
\begin{itemize}
\item Choose $c_{2}=\sqrt{2\log(K/\delta)}$, first probability term is less than $\delta$.
\item Choose $c_{1}=\frac{4}{\sqrt{\beta}}\log(KV)$, third probability term is less than $\frac{1}{KV}$.
\item Choose $\delta'=0.1$, second probability term is less than $K(0.995)^{\frac{V(\beta+K\beta^2)}{K(\beta+c_{1}\beta^{1/2})^2}}$.
 \end{itemize}
 As a result, with probability greater than 
\begin{align*}
 1-\delta-\frac{1}{KV}-K(0.995)^{\frac{V\beta}{K(\beta+2\log(KV))^2}},
\end{align*}
we have
\begin{align*}
\sigma_1(M_2) \leq \frac{1.1}{K(K\alpha+1)}\frac{V(\beta+K\beta^2)}{(V\beta-\sqrt{2V\beta\log(K/\delta) })^{2}}.
\end{align*}

\subsubsection*{Bound of $\sigma_{K}(M_2)$} 
We have that with probability greater than
\begin{align*}
1-&Ke^{-\frac{c_2}{2}\min\{\frac{c_2}{2},V\beta\}}
\\
-&KVe^{-\frac{c_1}{2}\min\{\frac{c_1}{2},\sqrt{\beta}\}}
\\
-&K[\frac{e^{-\delta'}}{(1-\delta')^{1-\delta'}}]^{\frac{V\beta}{K(\beta+c_{1}\beta^{1/2})^2}},
\end{align*}
we have
$$
\sigma_{K}(M_2) \geq  \frac{1}{K(K\alpha+1)} \frac{(1-\delta')V\beta}{(V\beta+c_{2}\sqrt{V\beta})^{2}}
$$

We can choose $c_1,c_2$ and $\delta'$ as follows to simplify the formula of the bound
\begin{itemize}
\item Choose $c_{2}=2\sqrt{\log(K/\delta_1)}$, first probability term is less than $\delta_1$.
\item Choose $c_{1}=\frac{2}{\sqrt{\beta}}\log(KV/\delta_2)$, third probability term is less than $\delta_2$.
\item Choose $\delta'$ as
\begin{align*}
  \delta'=& \left( \frac{\log(K/\delta_3)K\pq{\beta+2\log\pq{K/\delta_2}}^2}{V\beta}\right)^{\frac{1}{2}},
\end{align*}
second probability term is less than $\delta_3$.
 \end{itemize}
 As a result, with probability greater than $ 1-\delta_1-\delta_2 -\delta_3
$, we have
\begin{align*}
\sigma_K(M_2) \geq \frac{1}{K(K\alpha+1)} \frac{(1-\delta')V\beta}{(V\beta+2\sqrt{V\beta}\log(K/\delta_1))^{2}}
\end{align*}

As an alternative, we can choose $c_1,c_2$ and $\delta_1$ as follows to simplify the formula of the bound
\begin{itemize}
\item Choose $c_{1}=\frac{4}{\sqrt{\beta}}\log(KV)$, third probability term is less than $\frac{1}{KV}$.
\item Choose $c_{2}=2\sqrt{\log(K/\delta)}$, first probability term is less than $\delta$.
\item Choose $\delta'=0.1$, second probability term is less than $K(0.995)^{\frac{V(\beta+K\beta^2)}{K(\beta+c_{1}\beta^{1/2})^2}}$.
 \end{itemize}
 As a result, with probability greater than
 \begin{align*}
 1-\delta-\frac{1}{KV}-K(0.995)^{\frac{V\beta}{K(\beta+2\log(KV))^2}},
 \end{align*}
  we have
\begin{align*}
\sigma_K(M_2) \geq \frac{0.9}{K(K\alpha+1)}\frac{V\beta}{(V\beta+2\sqrt{V\beta\log{K/\delta}})^{2}}.
\end{align*}

\subsection{Lemma for Theorem \ref{thm:lda:noise}}
\begin{lemma}\label{lem:eigsvd}
With $\mathbf{\hat{M}}_2$ and $\mathbf{M}_2$ previously defined, we have that
$$
\max_i |\sigma_i(\hat{\mathbf{M}}_2) - \sigma_i(\mathbf{M}_2)| \leq \max_i |\lambda_i(\hat{\mathbf{M}}_2) - \lambda_i(\mathbf{M}_2)|
$$
\end{lemma}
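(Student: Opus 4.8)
The plan is to use the two structural hypotheses head-on. Since $\mathbf{M}_2 \succeq 0$, its singular values and eigenvalues coincide, so $\sigma_i(\mathbf{M}_2) = \lambda_i(\mathbf{M}_2) \ge 0$ for all $i$ (eigenvalues listed in decreasing order). Since $\hat{\mathbf{M}}_2$ is symmetric, the decreasing sequence $(\sigma_i(\hat{\mathbf{M}}_2))_i$ is exactly the decreasing rearrangement of $(|\lambda_j(\hat{\mathbf{M}}_2)|)_j$. Write $D := \max_j |\lambda_j(\hat{\mathbf{M}}_2) - \lambda_j(\mathbf{M}_2)|$ for the right-hand side of the claim; it then suffices to prove the per-index estimate $|\sigma_i(\hat{\mathbf{M}}_2) - \sigma_i(\mathbf{M}_2)| \le D$ and take the maximum over $i$.

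The heart of the matter is to sandwich $\sigma_i(\hat{\mathbf{M}}_2)$ between $\lambda_i(\hat{\mathbf{M}}_2)$ and $\max\{\lambda_i(\hat{\mathbf{M}}_2), D\}$. For the lower bound $\sigma_i(\hat{\mathbf{M}}_2) \ge \lambda_i(\hat{\mathbf{M}}_2)$: if $\lambda_i(\hat{\mathbf{M}}_2) \ge 0$, then $|\lambda_1(\hat{\mathbf{M}}_2)|, \dots, |\lambda_i(\hat{\mathbf{M}}_2)|$ are each at least $\lambda_i(\hat{\mathbf{M}}_2)$, so the $i$-th largest of all the $|\lambda_j(\hat{\mathbf{M}}_2)|$ is too; and if $\lambda_i(\hat{\mathbf{M}}_2) < 0$ the inequality is trivial since singular values are nonnegative. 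For the upper bound $\sigma_i(\hat{\mathbf{M}}_2) \le \max\{\lambda_i(\hat{\mathbf{M}}_2), D\}$ I invoke semidefiniteness: by the definition of $D$ and $\mathbf{M}_2 \succeq 0$ we get $\lambda_n(\hat{\mathbf{M}}_2) \ge \lambda_n(\mathbf{M}_2) - D \ge -D$, so every eigenvalue of $\hat{\mathbf{M}}_2$ lies in $[-D, \infty)$. Hence any index $j$ with $|\lambda_j(\hat{\mathbf{M}}_2)| > \max\{\lambda_i(\hat{\mathbf{M}}_2), D\}$ must have $\lambda_j(\hat{\mathbf{M}}_2) > 0$ (a magnitude exceeding $D$ rules out a negative eigenvalue) and $\lambda_j(\hat{\mathbf{M}}_2) > \lambda_i(\hat{\mathbf{M}}_2)$, forcing $j < i$ by the decreasing ordering. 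So fewer than $i$ of the $|\lambda_j(\hat{\mathbf{M}}_2)|$ exceed $\max\{\lambda_i(\hat{\mathbf{M}}_2), D\}$, which is precisely the assertion $\sigma_i(\hat{\mathbf{M}}_2) \le \max\{\lambda_i(\hat{\mathbf{M}}_2), D\}$.

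Combining the sandwich with $|\lambda_i(\hat{\mathbf{M}}_2) - \lambda_i(\mathbf{M}_2)| \le D$ (immediate from the definition of $D$) then closes the argument: on one side, $\sigma_i(\hat{\mathbf{M}}_2) - \sigma_i(\mathbf{M}_2) \le \max\{\lambda_i(\hat{\mathbf{M}}_2), D\} - \lambda_i(\mathbf{M}_2) \le \max\{\lambda_i(\hat{\mathbf{M}}_2) - \lambda_i(\mathbf{M}_2),\, D - \lambda_i(\mathbf{M}_2)\} \le D$, using $\lambda_i(\mathbf{M}_2) \ge 0$; on the other side, $\sigma_i(\mathbf{M}_2) - \sigma_i(\hat{\mathbf{M}}_2) = \lambda_i(\mathbf{M}_2) - \sigma_i(\hat{\mathbf{M}}_2) \le \lambda_i(\mathbf{M}_2) - \lambda_i(\hat{\mathbf{M}}_2) \le D$. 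I expect the one genuine obstacle to be the reordering phenomenon that makes the PSD hypothesis indispensable: for an indefinite symmetric matrix the singular values are a permutation of $\{|\lambda_j|\}$, not that set matched to the eigenvalues index-by-index, so one cannot simply feed the eigenvalue perturbation bound through. What rescues the proof is that $\mathbf{M}_2 \succeq 0$ confines every negative eigenvalue of $\hat{\mathbf{M}}_2$ to the interval $[-D, 0)$, so the reordering only permutes quantities of magnitude at most $D$ and cannot move anything past a singular value larger than $D$ — and the two-sided bound on $\sigma_i(\hat{\mathbf{M}}_2)$ above is exactly the quantitative form of that observation.
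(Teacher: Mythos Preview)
Your proof is correct and follows essentially the same approach as the paper's: both rely on the lower bound $\sigma_i(\hat{\mathbf{M}}_2) \geq \lambda_i(\hat{\mathbf{M}}_2)$ together with the key observation that, because $\mathbf{M}_2 \succeq 0$, every negative eigenvalue of $\hat{\mathbf{M}}_2$ has magnitude at most $D$. Your uniform sandwich $\lambda_i(\hat{\mathbf{M}}_2) \le \sigma_i(\hat{\mathbf{M}}_2) \le \max\{\lambda_i(\hat{\mathbf{M}}_2), D\}$ is a slightly cleaner packaging of the paper's case split on the first index $j$ where $|\lambda_j(\hat{\mathbf{M}}_2)| \neq \sigma_j(\hat{\mathbf{M}}_2)$, but the underlying idea is the same.
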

\begin{proof}
Because $\mathbf{M}_2$ is a symmetric semidefinite matrix, so we have
$$
\sigma_i(\mathbf{M}_2) = \lambda_i(\mathbf{M}_2),\quad \forall i,
$$
And because $\mathbf{\hat{M}}_2$ is a symmetric matrix, we have
$$
\sigma_i(\hat{\mathbf{M}}_2) = | \lambda_{s(i)}(\hat{\mathbf{M}}_2) |,\quad \forall i,
$$
for some permutation $s$.

Because we have $\lambda_i(\hat{\mathbf{M}}_2)\leq |\lambda_i(\hat{\mathbf{M}}_2)|=\sigma_j(\hat{\mathbf{M}}_2)$, so we have $\lambda_i(\hat{\mathbf{M}}_2)\leq \sigma_i(\hat{\mathbf{M}}_2)$.

Let $j$ be the smallest index that $|\lambda_j(\hat{\mathbf{M}}_2)| \neq \sigma_j(\hat{\mathbf{M}}_2)$, for $i<j$, we have
\begin{align*}
|\sigma_i(\hat{\mathbf{M}}_2) -& \sigma_i(\mathbf{M}_2)| \\
=& |\lambda_i(\hat{\mathbf{M}}_2) - \lambda_i(\mathbf{M}_2)| \\
 \leq& \max_i |\lambda_i(\hat{\mathbf{M}}_2) - \lambda_i(\mathbf{M}_2)|
\end{align*}

By the fact that $\lambda_i(\mathbf{M}_2)\geq 0$, we have that for $\forall i\geq j$,
$$
\sigma_i(\hat{\mathbf{M}}_2) \leq  \max_k |\lambda_k(\hat{\mathbf{M}}_2) - \lambda_k(\mathbf{M}_2)|
$$
We also have
$$
\sigma_i(\hat{\mathbf{M}}_2) \geq  \lambda_i(\hat{\mathbf{M}}_2)
$$
Because
$$
|\lambda_i(\hat{\mathbf{M}}_2) - \sigma_i(\mathbf{M}_2)|  \leq \max_k |\lambda_k(\hat{\mathbf{M}}_2) - \lambda_k(\mathbf{M}_2)|
$$
We can prove that
$$
|\sigma_i(\hat{\mathbf{M}}_2) - \sigma_i(\mathbf{M}_2)| \leq \max_k |\lambda_k(\hat{\mathbf{M}}_2) - \lambda_k(\mathbf{M}_2)|
$$
Therefore,
$$
\max_i |\sigma_i(\hat{\mathbf{M}}_2) - \sigma_i(\mathbf{M}_2)| \leq \max_i |\lambda_i(\hat{\mathbf{M}}_2) - \lambda_i(\mathbf{M}_2)|
$$
\end{proof}

\section{Theoretical results for GMM}\label{apd:gmm}
The proof of Theorem~\ref{thm:gmm:final} is achieved by analyzing the concentration result $\deltaR$ of empirical second order moments and also upper bound for the first singular value of the true moment $\SecM$. Thresholding with $\deltaR$ leads to the first claim, while solving the inequality on the $\SVK{1}{\SecMH}$ provides the second claim.
\subsection{Relation Between $\mathbf{M}_2$ and $\hat{\mathbf{M}}_2$}\label{sec:gmm:M2}
 We bound the different between singular values of $\mathbf{M}_2$ through the following Theorem.
 \begin{theorem}\label{thm:gmm:noise}
For \emph{spherical Gaussian mixtures} with probability at least $1-\delta$, $\forall i\in\{1,2,\dots,m\}$,we have
\begin{equation}\label{eq:gmm:cu}
|\sigma_i(\hat{\mathbf{M}}_2) - \sigma_i(\mathbf{M}_2)| \leq \frac{\sigma m}{\sqrt{ N\delta}}\sqrt{2\sigma_{\mu}^2+\frac{m+1}{m}\sigma^2}=\delta_{\mathbf{R}}\nonumber
\end{equation}
Especially, when $i\leq K+1$, we have
\begin{equation}\label{eq:gmm:cup}
\sigma_i(\hat{\mathbf{M}}_2) \leq \frac{\sigma m}{\sqrt{ N\delta}}\sqrt{2\sigma_{\mu}^2+\frac{m+1}{m}\sigma^2}.
\end{equation}
\end{theorem}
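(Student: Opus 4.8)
The plan is to follow the same three-step recipe used for the LDA case in Theorem~\ref{thm:lda:noise}. Write $\RM = \SecM - \SecMH$. First I would reduce the singular-value deviation to $\FNorm{\RM}$: since $\SecM = \sum_k w_k \bm{\mu}_k \otimes \bm{\mu}_k$ is symmetric and positive semidefinite while $\SecMH$ is symmetric, Lemma~\ref{lem:eigsvd} gives $\max_i |\SVK{i}{\SecMH} - \SVK{i}{\SecM}| \leq \max_i |\EVK{i}{\SecMH} - \EVK{i}{\SecM}|$, and then Weyl's inequality followed by $\TNorm{\RM}\leq\FNorm{\RM}$ yields $\max_i |\SVK{i}{\SecMH} - \SVK{i}{\SecM}| \leq \FNorm{\RM}$. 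The ``especially'' part is then immediate: $\text{Rank}(\SecM)\leq K$, so $\SVK{i}{\SecM}=0$ and hence $\SVK{i}{\SecMH}\leq\FNorm{\RM}$ for $i\geq K+1$. So it suffices to bound $\FNorm{\RM}$, which I would do in expectation and convert to a high-probability statement by Markov's inequality with $t=1/\delta$, exactly as in Lemma~\ref{lem:lda:ub}.

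The core is $\Expect{\FNorm{\RM}^2}$. Conditioning on the mixture parameters $(\bm{\mu},\mathbf{w})$, the matrix $\SecMH+\sigma^2\mathbf{I} = \frac1N\sum_i \mathbf{x}_i\otimes\mathbf{x}_i$ is an average of $N$ i.i.d.\ rank-one matrices with common conditional mean $\SecM+\sigma^2\mathbf{I}$; hence $\ExpectC{\RMij{i}{j}}{\bm{\mu},\mathbf{w}}=0$ and $\ExpectC{\FNorm{\RM}^2}{\bm{\mu},\mathbf{w}} = \frac1N\sum_{i,j}\text{Var}[(\mathbf{x})_i(\mathbf{x})_j\mid\bm{\mu},\mathbf{w}]$ for a single draw $\mathbf{x}$. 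I would evaluate $\sum_{i,j}\text{Var}[(\mathbf{x})_i(\mathbf{x})_j\mid\bm{\mu},\mathbf{w}] = \ExpectC{\|\mathbf{x}\|^4}{\bm{\mu},\mathbf{w}} - \FNorm{\SecM+\sigma^2\mathbf{I}}^2$ by further conditioning on the component assignment $h$ and using the Gaussian fourth moment: for $\mathbf{x}\mid h\sim\MG{\bm{\mu}_h}{\sigma^2\mathbf{I}}$ one has $\ExpectC{\|\mathbf{x}\|^2}{h} = \|\bm{\mu}_h\|^2 + m\sigma^2$ and $\text{Var}[\|\mathbf{x}\|^2\mid h] = 4\sigma^2\|\bm{\mu}_h\|^2 + 2m\sigma^4$, which determines $\ExpectC{\|\mathbf{x}\|^4}{h}$ explicitly. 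Then I would take the outer expectations over $h\sim\text{Multi}(\mathbf{w})$, over $\mathbf{w}\sim\text{Dir}(\bm{\alpha})$, and over $\bm{\mu}_k\sim\MG{\mathbf{0}}{\sigma_\mu^2\mathbf{I}}$, using $\Expect{\|\bm{\mu}_k\|^2}=m\sigma_\mu^2$, $\Expect{\|\bm{\mu}_k\|^4}=m(m+2)\sigma_\mu^4$, $\Expect{\langle\bm{\mu}_k,\bm{\mu}_l\rangle^2}=m\sigma_\mu^4$ for $k\neq l$, and $\sum_k\Expect{w_k}=1$. Collecting the dominant contributions should reproduce $\Expect{\FNorm{\RM}^2} \le \frac{\sigma^2 m^2}{N}\pq{2\sigma_\mu^2 + \frac{m+1}{m}\sigma^2} = \delta\,\deltaR^2$, and Markov with $t=1/\delta$ then gives $\FNorm{\RM}\leq\deltaR$ with probability at least $1-\delta$, which proves both \eqref{eq:gmm:cu} and \eqref{eq:gmm:cup}.

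The main obstacle I anticipate is the fourth-moment bookkeeping. Because the samples are only conditionally i.i.d.\ given $(\bm{\mu},\mathbf{w})$, one must subtract the correct conditional mean $\SecM+\sigma^2\mathbf{I}$ (not the unconditional one) before taking variances, and one must carefully track which diagonal versus off-diagonal entries of $\mathbf{x}\otimes\mathbf{x}$ contribute; this split is exactly what separates the $m^2$ coefficient of the $\sigma_\mu^2\sigma^2$ term from the $m(m+1)$ coefficient of the $\sigma^4$ term inside $\deltaR$, so that step has to be done exactly rather than with $O(\cdot)$ relaxations. Everything else --- Lemma~\ref{lem:eigsvd}, Weyl's inequality, Frobenius domination of the spectral norm, and Markov's inequality --- is identical to the LDA argument and carries over with no new ideas.
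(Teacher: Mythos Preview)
Your proposal is correct and follows essentially the same route as the paper: reduce the singular-value deviation to $\FNorm{\RM}$ via Lemma~\ref{lem:eigsvd}, Weyl's inequality, and $\TNorm{\cdot}\leq\FNorm{\cdot}$; compute $\Expect{\FNorm{\RM}^2}$ by conditioning on the latent parameters and using that the $\mathbf{x}_i$ are then i.i.d.; and finish with Markov's inequality at $t=1/\delta$. The only organizational difference is that the paper evaluates the entrywise variances separately for diagonal and off-diagonal entries (obtaining $\frac{1}{N}\sigma^2(2\sigma_\mu^2+\sigma^2)$ and $\frac{1}{N}\sigma^2(2\sigma_\mu^2+2\sigma^2)$, respectively, then summing $m(m-1)$ and $m$ of them), whereas you package the same calculation through the identity $\sum_{i,j}\mathrm{Var}[(\mathbf{x})_i(\mathbf{x})_j] = \Expect{\|\mathbf{x}\|^4} - \FNorm{\SecM+\sigma^2\mathbf{I}}^2$; both lead to the same target $\frac{m^2\sigma^2}{N}\bigl(2\sigma_\mu^2+\frac{m+1}{m}\sigma^2\bigr)$.
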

\begin{proof}
We establish the result by bounding the Frobenius of matrix $\RM$ as we do for LDA model.
The square of Frobenius norm is $
||\mathbf{R}||_\text{F}^2 = \sum_{i,j} \mathbf{R}_{ij}^2
$. Since we have $\mathbb{E}[\mathbf{R}_{ij}|\mu]=0$, thus 
\begin{align*}
Var[\mathbf{R}_{ij}|\mu]=\mathbb{E}[\mathbf{R}_{ij}^2|\mu]-\mathbb{E}^2[\mathbf{R}_{ij}|\mu]=\mathbb{E}[\mathbf{R}_{ij}^2|\mu],
\end{align*}
and
\begin{align*}
\mathbb{E}[||\mathbf{R}||_\text{F}^2] 
= & \mathbb{E}[\mathbb{E}[||\mathbf{R}||_\text{F}^2|\mu]]  \\
= &  \mathbb{E}[\sum_{i,j}Var[\mathbf{R}_{ij}|\mu]|\mu]  \\
= & \mathbb{E}[\sum_{i\neq j}Var[\mathbf{R}_{ij}|\mu] + \sum_{i}Var[\mathbf{R}_{ii}|\mu] |\mu] \\
= & \frac{m(m-1)}{N}\sigma^2(2\sigma_{\mu}^2+\sigma^2)+\frac{m}{N}\sigma^2(2\sigma_{\mu}^2+2\sigma^2) \\
= & \frac{m^2 \sigma^2}{N}(2\sigma_{\mu}^2+\frac{m+1}{m}\sigma^2).
\end{align*}

Then by Markov inequality, we have
$$
\text{Pr}(||\mathbf{R}||_\text{F}^2 \geq k\times \mathbb{E}[||\mathbf{R}||_\text{F}^2] ) \leq 1/k.
$$

By setting $k=1/\delta$, we have that with at least probability $1-\delta$,
$$
\N{\mathbf{R}}_\text{F} \leq \frac{\sigma m}{\sqrt{ N\delta}}\sqrt{2\sigma_{\mu}^2+\frac{m+1}{m}\sigma^2}
$$
\end{proof}

\subsection{Spectral Structure of $\mathbf{M}_2$}
We use following theorem to characterize the spectral structure of $\mathbf{M}_2$.
\begin{theorem}\label{thm:gmm:spb}
Assume that $\alpha_i=\alpha$ in the \emph{spherical Gaussian mixtures}, we have

(1) With probability at least $1-\delta_1-\delta_2-2\exp(-t^2/2)$, we have
\begin{equation}\label{eq:gmm:s1}
\sigma_1(\mathbf{M}_2) \leq \frac{\sigma_\mu^2}{K}\frac{\alpha+2\log(K/\delta_1)}{\alpha-\sqrt{2\alpha\log(1/\delta_2)/K}} (\sqrt{m}+\sqrt{K}+t)^2
\end{equation}

(2) Further assume that and $w_i\geq w_{\min},\forall i$, then with probability at least $1-2\exp(-t^2/2)$, we have
\begin{equation}\label{eq:gmm:sk}
\sigma_K(\mathbf{M}_2) \geq w_{\min}\sigma_\mu^2 (\sqrt{m}-\sqrt{K}-t)^2
\end{equation}

\end{theorem}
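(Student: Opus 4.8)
The plan is to follow the template of the proof of Theorem~\ref{thm:mul}, exploiting a GMM-specific simplification: here the component means $\bm{\mu}_k$ have \emph{fully independent} Gaussian entries, so standard non-asymptotic bounds on the extreme singular values of a Gaussian random matrix apply directly and the decoupling device ($\mathbf{\Lambda}$, $\hat{\mathbf{O}}$) needed in the Dirichlet case is unnecessary. Conditioning on $\bm{\mu}$ and $w$, one has $\mathbb{E}[\mathbf{x}_i\otimes\mathbf{x}_i\mid h_i=k]=\bm{\mu}_k\bm{\mu}_k^\top+\sigma^2\mathbf{I}$, so $\mathbf{M}_2=\sum_{k}w_k\,\bm{\mu}_k\otimes\bm{\mu}_k=\mathbf{O}\mathbf{W}\mathbf{O}^\top$ with $\mathbf{O}=(\bm{\mu}_1,\dots,\bm{\mu}_K)\in\mathbb{R}^{m\times K}$ and $\mathbf{W}=\mathrm{diag}(w_1,\dots,w_K)$. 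Writing $\mathbf{M}_2=(\mathbf{O}\mathbf{W}^{1/2})(\mathbf{O}\mathbf{W}^{1/2})^\top$, its first $K$ singular values equal those of $\mathbf{W}^{1/2}\mathbf{O}^\top\mathbf{O}\mathbf{W}^{1/2}$, so by submultiplicativity of the spectral norm and the matching lower-singular-value inequality (both already used in the proof of Theorem~\ref{thm:mul}), $\sigma_1(\mathbf{M}_2)\le w_{\max}\,\sigma_1(\mathbf{O})^2$ and $\sigma_K(\mathbf{M}_2)\ge w_{\min}\,\sigma_K(\mathbf{O})^2$.

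Next I would control the singular values of $\mathbf{O}$. Since $\mathbf{O}/\sigma_\mu$ has i.i.d.\ $\mathcal{N}(0,1)$ entries and $m\ge K$, a standard concentration bound for the singular values of Gaussian matrices gives, with probability at least $1-2\exp(-t^2/2)$,
$$\sigma_\mu(\sqrt{m}-\sqrt{K}-t)\le\sigma_K(\mathbf{O})\le\sigma_1(\mathbf{O})\le\sigma_\mu(\sqrt{m}+\sqrt{K}+t).$$
Together with the previous step and the assumption $w_k\ge w_{\min}$, the lower bound here yields claim~(2) directly on this event; note that no Dirichlet concentration is needed for part~(2).

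For claim~(1) I would bound $w_{\max}$ through the Gamma representation $w_k=g_k/\sum_j g_j$ with $g_j\stackrel{\text{iid}}{\sim}\mathrm{Gamma}(\alpha,1)$. A Gamma upper-tail estimate and a union bound over the $K$ coordinates give $\max_k g_k\le\alpha+2\log(K/\delta_1)$ with probability at least $1-\delta_1$ --- the same kind of estimate used to bound $\max_v\sigma_1(\mathbf{r}_v^\top\mathbf{r}_v)$ in Theorem~\ref{thm:mul}, via Lemma~\ref{crl:SSGRV}/Lemma~\ref{lem:GammaMaxMin}. Since $\sum_j g_j\sim\mathrm{Gamma}(K\alpha,1)$, a sub-gamma lower-tail bound gives $\sum_j g_j\ge K\alpha-\sqrt{2K\alpha\log(1/\delta_2)}=K\big(\alpha-\sqrt{2\alpha\log(1/\delta_2)/K}\big)$ with probability at least $1-\delta_2$. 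On the intersection of these events, $w_{\max}\le\frac{1}{K}\cdot\frac{\alpha+2\log(K/\delta_1)}{\alpha-\sqrt{2\alpha\log(1/\delta_2)/K}}$; substituting this bound and $\sigma_1(\mathbf{O})\le\sigma_\mu(\sqrt m+\sqrt K+t)$ into $\sigma_1(\mathbf{M}_2)\le w_{\max}\sigma_1(\mathbf{O})^2$ and a union bound over the three events gives \eqref{eq:gmm:s1} with probability at least $1-\delta_1-\delta_2-2\exp(-t^2/2)$.

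I expect the only delicate part to be the Gamma tail estimates in the last step: one must pick the constants so that the two bounds come out exactly as $\alpha+2\log(K/\delta_1)$ and $K\alpha-\sqrt{2K\alpha\log(1/\delta_2)}$, and be careful that the denominator stays positive --- which is precisely why Theorem~\ref{thm:gmm:final} carries the $\max\{0_+,\cdot\}$. The remaining steps are essentially a verbatim transcription of the corresponding parts of the proof of Theorem~\ref{thm:mul}, combined with an off-the-shelf bound on the singular values of a Gaussian random matrix.
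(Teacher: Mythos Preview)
Your proposal is correct and follows essentially the same argument as the paper's own proof: factor $\mathbf{M}_2=\mathbf{O}\,\mathrm{diag}(w)\,\mathbf{O}^\top$, bound $\sigma_1$ and $\sigma_K$ via $\sigma_1(\mathbf{O}^\top\mathbf{O})\max_k w_k$ and $\sigma_K(\mathbf{O}^\top\mathbf{O})\min_k w_k$, invoke the standard non-asymptotic Gaussian singular-value bounds for $\mathbf{O}$, and control $\max_k w_k$ through the Gamma representation of the symmetric Dirichlet exactly as in Appendix~\ref{apd:sec:dir}. Your remark that the $\mathbf{\Lambda}$-decoupling from Theorem~\ref{thm:mul} is unnecessary here is precisely the simplification the paper exploits.
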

\begin{proof}
We have $\mathbf{M}_2=\sum_{k=1}^{K}w_k \mu_k \otimes \mu_k =\mathbf{O} \mathbf{A} \mathbf{O}^{\top}$, where $\mathbf{O}=(\mu_1,\mu_2,\dots,\mu_K)$ is a $m \times K$ matrix and $\mathbf{A}=diag(w_1,w_2,\dots,w_K)$ is a diagonal matrix. Because $\mathbf{M}_2 = \mathbf{O} \mathbf{A} \mathbf{O}^{\top} = \mathbf{O}\mathbf{A}^{1/2} \mathbf{A}^{1/2}\mathbf{O}^{\top}$, we have that $\sigma_i(\mathbf{M}_2)=\sigma_i(\mathbf{A}^{1/2}\mathbf{O}^{\top}\mathbf{O}\mathbf{A}^{1/2}),\forall i=1,2,\dots,K$. Therefore, we have the following inequalities~\cite{hornmatrix}:
\begin{align}\label{eq:gmm:spb}
\sigma_1(\mathbf{M}_2) \leq \sigma_1(\mathbf{O}^{\top}\mathbf{O})\sigma_1(\mathbf{A}),
\\
\sigma_K(\mathbf{M}_2) \geq \sigma_K(\mathbf{O}^{\top}\mathbf{O})\sigma_K(\mathbf{A}).
\end{align}
Note that the elements of $\mathbf{O}$ are i.i.d. Gaussian random variables, i.e., $\mathbf{O}_{ij}\sim \mathcal{N}(0,\sigma_\mu^2)$. The distribution of $\sigma_i(\mathbf{O}^{\top}\mathbf{O})$ has been well-studied in random matrix theory~\cite{vershynin2010introduction}. With probability at least $1-2\exp(-t^2/2)$, we have
\begin{align*}
\sigma_1(\mathbf{O}^{\top}\mathbf{O}) \leq \sigma_\mu^2 (\sqrt{m}+\sqrt{K}+t)^2,
\\
\sigma_K(\mathbf{O}^{\top}\mathbf{O}) \geq \sigma_\mu^2 (\sqrt{m}-\sqrt{K}-t)^2.
\end{align*}
And since $\sigma_1(\mathbf{A})=\max_i\{w_i\}$, we can prove that with probability at least $1-\delta_1 - \delta_2$, we have (see appendix \ref{apd:sec:dir} for proof)
$$
\max_i\{w_i\} \leq \frac{1}{K}\frac{\alpha+2\log(K/\delta_1)}{\alpha-\sqrt{2\alpha\log(1/\delta_2)/K}}
$$

We also have $\sigma_K(\mathbf{A})=\min_i\{w_i\}\geq w_{\min}$. We complete the proof by substituting the above  formulas into inequalities (\ref{eq:gmm:spb}).
\end{proof}

\section{Tail bound for Gamma distribution}
In this section, we proof some tail bound related to the Gamma distribution. Our main tool is the following Lemma.
\begin{lemma}\label{lem:mnl}
\textbf{[Massart and Laurent] Tail Bound for Chi-square distribution} Let $U$ be a $\chi^2_D$ random variable with $D$ degree of freedom, then for any positive $x$, the following holds
\begin{align*}
\text{Pr}(U\geq D+2\sqrt{Dx}+2x) \leq e^{-x},
\\
\text{Pr}(U \leq D-2\sqrt{Dx}) \leq e^{-x}.
\end{align*}
\end{lemma}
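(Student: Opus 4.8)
The plan is to use the classical Chernoff (Laplace transform) method, exploiting that $U=\sum_{i=1}^D Z_i^2$ is a sum of $D$ independent squared standard normals. The only probabilistic input is the moment generating function: since $\mathbb{E}[e^{\lambda Z_i^2}]=(1-2\lambda)^{-1/2}$ for $0\le\lambda<1/2$, independence gives $\mathbb{E}[e^{\lambda U}]=(1-2\lambda)^{-D/2}$, and likewise $\mathbb{E}[e^{-\lambda U}]=(1+2\lambda)^{-D/2}$ for $\lambda\ge 0$. Everything after this point is a deterministic optimization over $\lambda$.

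For the upper tail I would apply Markov's inequality to $e^{\lambda U}$: for every $0<\lambda<1/2$, $\Pr(U\ge D+t)\le\exp\!\big(-\lambda(D+t)-\tfrac{D}{2}\log(1-2\lambda)\big)$. Plugging in the minimizer $\lambda^\star=\tfrac{t}{2(D+t)}$ yields $\Pr(U\ge D+t)\le\exp\!\big(-\tfrac12[\,t-D\log(1+t/D)\,]\big)$. Now substitute $t=2\sqrt{Dx}+2x$ and write $y=\sqrt{x/D}$, so that $t/D=2y+2y^2$; the claim reduces to the elementary inequality $1+2y+2y^2\le e^{2y}$ (immediate from the power series of $e^{2y}$), which gives $\log(1+t/D)\le 2y$ and hence $t-D\log(1+t/D)\ge 2Dy^2=2x$, as required. (Alternatively one can relax $-\log(1-2\lambda)\le 2\lambda+\tfrac{2\lambda^2}{1-2\lambda}$ to land in the standard sub-gamma framework and optimize there.)

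For the lower tail I would argue symmetrically with $e^{-\lambda U}$: Markov gives $\Pr(U\le D-t)\le\exp\!\big(\lambda(D-t)-\tfrac{D}{2}\log(1+2\lambda)\big)$ for $\lambda>0$. Using $\log(1+2\lambda)\ge 2\lambda-2\lambda^2$ and optimizing at $\lambda=t/(2D)$ produces the clean sub-Gaussian bound $\Pr(U\le D-t)\le e^{-t^2/(4D)}$; taking $t=2\sqrt{Dx}$ gives exactly $e^{-x}$. When $t\ge D$ the event is empty and there is nothing to prove.

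I do not expect a genuine obstacle: this is a textbook concentration estimate (Laurent–Massart). The only mildly delicate points are choosing the free parameter $\lambda$ correctly and checking the scalar inequalities used to simplify the exponent ($1+2y+2y^2\le e^{2y}$ for the upper tail, and $\log(1+2\lambda)\ge 2\lambda-2\lambda^2$ for the lower tail), each of which is a one-line calculus or power-series argument.
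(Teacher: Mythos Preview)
Your proposal is correct and is in fact the standard Chernoff/Cram\'er argument used in Laurent--Massart. The paper itself does not supply a proof of this lemma at all: it simply writes ``See \cite{laurent2000adaptive} for proof'' and moves on. So there is nothing to compare against in the paper; your write-up is strictly more detailed than what the authors provide, and it matches the original source's approach (moment generating function of $\chi^2_D$, Markov's inequality, then optimize in $\lambda$). The two scalar inequalities you invoke, $1+2y+2y^2\le e^{2y}$ for the upper tail and $\log(1+2\lambda)\ge 2\lambda-2\lambda^2$ for the lower tail, are both valid for nonnegative arguments, and your optimizers $\lambda^\star=t/(2(D+t))$ and $\lambda=t/(2D)$ are the correct choices.
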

\begin{proof}
See \cite{laurent2000adaptive}  for proof. 
\end{proof}

\subsection{Tail Bound for a Single Gamma Distribution}
In this section, we provide tail bound for a single Gamma random variable (R. V.).

\begin{lemma}
\textbf{Tail Bound for Gamma R.V.} Let $X\sim Gamma(\alpha,1)$ be a Gamma R.V. with shape parameter $\alpha$, and scale parameter $1$, then for any positive $c$, the following holds
\begin{align*}
\text{Pr}(X \geq \alpha+c\sqrt{\alpha}) \leq & e^{-\frac{c}{2}\min\{\frac{c}{2},\sqrt{\alpha}\}},
\\
\text{Pr}(X \leq \alpha - c\sqrt{\alpha}) \leq & e^{-\frac{c^2}{2}}.
\end{align*}

\end{lemma}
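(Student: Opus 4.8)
The plan is to deduce both inequalities from the Laurent--Massart bounds of Lemma~\ref{lem:mnl} via the distributional identity $\mathrm{Gamma}(\alpha,1)\stackrel{d}{=}\tfrac12\,\chi^2_{2\alpha}$: if $U\sim\chi^2_{2\alpha}$ then $X=U/2\sim\mathrm{Gamma}(\alpha,1)$, and since the estimates of Lemma~\ref{lem:mnl} rest only on a moment-generating-function computation they remain valid for the (possibly non-integer) parameter $D=2\alpha$. Setting $U=2X$, $D=2\alpha$ and dividing the events through by $2$ turns Lemma~\ref{lem:mnl} into: for every $x>0$,
\[
\Pr\!\bigl(X\ge\alpha+\sqrt{2\alpha x}+x\bigr)\le e^{-x},
\qquad
\Pr\!\bigl(X\le\alpha-\sqrt{2\alpha x}\bigr)\le e^{-x}.
\]
The lower-tail claim is then immediate: choosing $x=c^2/2$ makes $\sqrt{2\alpha x}=c\sqrt\alpha$, so $\Pr(X\le\alpha-c\sqrt\alpha)\le e^{-c^2/2}$, exactly as asserted.

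For the upper tail I would take $x$ to be the root of $\sqrt{2\alpha x}+x=c\sqrt\alpha$ (a quadratic in $\sqrt x$) and lower-bound it in two regimes, the split matching the two-term shape of the target exponent to the two summands $\sqrt{2\alpha x}$ and $x$. When $c\lesssim\sqrt\alpha$ the linear term is of lower order, the root is of the order of $c^2/2$, and a lower bound of $c^2/4$ for it already yields $\Pr(X\ge\alpha+c\sqrt\alpha)\le e^{-c^2/4}$. When $c\gtrsim\sqrt\alpha$ the square-root term is of lower order, the root is of the order of $c\sqrt\alpha$, and a lower bound of $c\sqrt\alpha/2$ yields $e^{-c\sqrt\alpha/2}$. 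The two cases combine into $e^{-\frac c2\min\{\frac c2,\sqrt\alpha\}}$.

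The step I expect to cost the most care is the transition regime $c\asymp\sqrt\alpha$, where neither $\sqrt{2\alpha x}$ nor $x$ dominates, so the heuristic lower bounds sketched above must be turned into honest inequalities with the cross terms tracked at the right constants; this is the only place the bound is genuinely tight. A self-contained alternative that bypasses Lemma~\ref{lem:mnl} is a direct Chernoff bound on the Gamma moment generating function $\mathbb{E}[e^{tX}]=(1-t)^{-\alpha}$ for $0<t<1$: optimizing over $t$ at threshold $a=\alpha+c\sqrt\alpha$ gives the exponent $(a-\alpha)-\alpha\log(a/\alpha)=\alpha\bigl(u-\log(1+u)\bigr)$ with $u=c/\sqrt\alpha$, and the elementary bounds $u-\log(1+u)\ge u^2/4$ for moderate $u$ and $u-\log(1+u)\ge u/2$ for large $u$ convert this into the same conclusion; the lower tail follows identically from $\mathbb{E}[e^{-tX}]=(1+t)^{-\alpha}$.
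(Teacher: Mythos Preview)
Your approach matches the paper's proof: both reduce to Lemma~\ref{lem:mnl} via $2X\sim\chi^2_{2\alpha}$, obtain the lower tail immediately from $x=c^2/2$, and for the upper tail solve $\sqrt{2\alpha x}+x=c\sqrt\alpha$ for the root $x$ and then verify $x\ge\tfrac{c}{2}\min\{\tfrac{c}{2},\sqrt\alpha\}$. The only difference is presentational---the paper records the root in closed form as $x=c\sqrt\alpha-\alpha\bigl(\sqrt{1+2c\alpha^{-1/2}}-1\bigr)$ before asserting the elementary inequality, which may streamline the two-regime case analysis you sketch.
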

\begin{proof}
By relationship between Gamma R.V. and chi-square R.V., we have that $2X\sim \chi_{2\alpha}^2$. Apply Lemma \ref{lem:mnl} directly, we have
\begin{align*}
\text{Pr}(X \geq \alpha+c\sqrt{\alpha}) \leq & e^{-c\sqrt{\alpha}+\alpha(\sqrt{1+2c\alpha^{-1/2}}-1)},
\\
\text{Pr}(X \leq \alpha - c\sqrt{\alpha}) \leq & e^{-\frac{c^2}{2}}.
\end{align*}

To get the same formula as in the lemma, we can easily prove that $c\sqrt{\alpha}-\alpha(\sqrt{1+2c\alpha^{-1/2}}-1)> \frac{c}{2}\min\{\frac{c}{2},\sqrt{\alpha}\},\quad \forall c,\alpha>0 $.

\end{proof}

\begin{corollary}\label{crl:SSGRV}
\textbf{Tail Bound for Sum of Square of Gamma R.V.} If we have $n$ i.i.d Gamma R.V. $X_i\sim Gamma(\alpha,1),i=1,\dots,n$, then for any positive $c$, the following holds
$$
\text{Pr}(\sum_i X_i^2 \geq n(\alpha+c\sqrt{\alpha})^2 ) \leq n e^{-\frac{c}{2}\min\{\frac{c}{2},\sqrt{\alpha}\}}.
$$
\end{corollary}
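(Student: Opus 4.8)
The plan is to reduce the statement to the single-variable Gamma tail bound established immediately above (the preceding Lemma for $X\sim\text{Gamma}(\alpha,1)$), via a pigeonhole argument followed by a union bound. First I would observe that since each $X_i\geq 0$, the event $\{\sum_{i} X_i^2 \geq n(\alpha+c\sqrt{\alpha})^2\}$ forces the existence of at least one index $i$ with $X_i^2 \geq (\alpha+c\sqrt{\alpha})^2$, equivalently $X_i \geq \alpha+c\sqrt{\alpha}$ after taking the nonnegative square root. Stated contrapositively: if $X_i < \alpha+c\sqrt{\alpha}$ for every $i$, then $\sum_i X_i^2 < n(\alpha+c\sqrt{\alpha})^2$, so the averaged sum cannot exceed the stated level.

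Consequently we get the inclusion of events $\{\sum_i X_i^2 \geq n(\alpha+c\sqrt{\alpha})^2\} \subseteq \bigcup_{i=1}^{n}\{X_i \geq \alpha+c\sqrt{\alpha}\}$, and a union bound yields $\text{Pr}(\sum_i X_i^2 \geq n(\alpha+c\sqrt{\alpha})^2) \leq \sum_{i=1}^{n}\text{Pr}(X_i \geq \alpha+c\sqrt{\alpha})$. Each term on the right is bounded by $e^{-\frac{c}{2}\min\{\frac{c}{2},\sqrt{\alpha}\}}$ by the single Gamma tail bound, and since the $X_i$ are identically distributed, summing the $n$ identical estimates gives exactly $n\,e^{-\frac{c}{2}\min\{\frac{c}{2},\sqrt{\alpha}\}}$, as claimed.

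There is essentially no analytic obstacle: the exponential tail estimate has already been done through the Massart--Laurent chi-square bound and its Gamma corollary. The only point requiring a little care is the pigeonhole step — specifically, that nonnegativity of the $X_i$ is precisely what lets us pass from ``$\sum_i X_i^2$ is large'' to ``some $X_i$ is large'' with no cross terms to worry about, and that $t\mapsto t^2$ is monotone on $[0,\infty)$ so that squaring and taking roots preserve the relevant inequalities. This corollary is then a packaging step, invoked later in the proof of Theorem~\ref{thm:mul} to control $\max_v \sigma_1(\mathbf{r}_v^\top\mathbf{r}_v)$ where each $\mathbf{r}_v$ has i.i.d.\ $\text{Gamma}(\beta,1)$ entries.
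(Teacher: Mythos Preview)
Your proof is correct and matches the paper's implicit argument: the statement is labeled a corollary precisely because it follows from the single-variable Gamma tail bound by the pigeonhole-plus-union-bound reasoning you wrote out, and the paper offers no further proof beyond that placement.
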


\subsection{Tail Bound for Maximum/Minimum of Gamma Random Variables}
\begin{lemma}\label{lem:GammaMaxMin}
If we have $n$ i.i.d Gamma R.V. $X_i\sim Gamma(\alpha,1),i=1,\dots,n$, we have that
\begin{align*}
\text{Pr}(\max_i \{X_i\} \geq \alpha+c\sqrt{\alpha}) \leq & n e^{-\frac{c}{2}\min\{\frac{c}{2},\sqrt{\alpha}\}},
\\
\text{Pr}(\min_i \{X_i\} \leq \alpha - c\sqrt{\alpha}) \leq & n e^{-\frac{c^2}{2}}.
\end{align*}
\end{lemma}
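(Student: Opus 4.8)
The plan is to reduce the statement to the single-variable Gamma tail bound established in the previous lemma, together with a union bound. The key observation is that the event $\{\max_i X_i \ge \alpha + c\sqrt{\alpha}\}$ is exactly the union $\bigcup_{i=1}^n \{X_i \ge \alpha + c\sqrt{\alpha}\}$, and likewise $\{\min_i X_i \le \alpha - c\sqrt{\alpha}\} = \bigcup_{i=1}^n \{X_i \le \alpha - c\sqrt{\alpha}\}$. Neither of these set identities uses independence; they are purely combinatorial.

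First I would handle the maximum. By the union bound and the identity above,
\[
\Pr\!\left(\max_i X_i \ge \alpha + c\sqrt{\alpha}\right) \le \sum_{i=1}^n \Pr\!\left(X_i \ge \alpha + c\sqrt{\alpha}\right) \le n\, e^{-\frac{c}{2}\min\{\frac{c}{2},\sqrt{\alpha}\}},
\]
where the last inequality applies the single Gamma upper-tail bound to each of the $n$ identically distributed terms. The argument for the minimum is symmetric:
\[
\Pr\!\left(\min_i X_i \le \alpha - c\sqrt{\alpha}\right) \le \sum_{i=1}^n \Pr\!\left(X_i \le \alpha - c\sqrt{\alpha}\right) \le n\, e^{-\frac{c^2}{2}},
\]
again invoking the single-variable lower-tail bound termwise. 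This gives both claimed inequalities.

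There is essentially no obstacle here: the lemma is an immediate corollary of the preceding single-variable tail bound. The only point worth stating carefully is that the bound holds for \emph{any} collection of marginally $\mathrm{Gamma}(\alpha,1)$ variables, since only the union bound (not independence) is used — the i.i.d.\ hypothesis is more than enough. If one wanted a sharper estimate one could exploit independence via $\Pr(\min_i X_i \le t) = 1 - (1-\Pr(X_1 \le t))^n$, but this is unnecessary for the downstream application in Theorem~\ref{thm:mul}, where the crude union bound already suffices after the choices of $c_1, c_2, \delta'$.
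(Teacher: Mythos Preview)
Your proof is correct and matches the paper's approach exactly: the paper's own proof is the single line ``It can be proved by applying union bound directly,'' which is precisely what you do by reducing to the preceding single-variable Gamma tail bound.
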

\begin{proof}
It can be proved by applying union bound directly. 
\end{proof}

\subsection{Tail Bound for Maximum/Minimum Element of Dirichlet Distribution}
\label{apd:sec:dir}

It is well known that a random vector $(x_1,x_2,\dots,x_n) \sim \text{Dir}(\alpha_1,\alpha_2,\dots,\alpha_n)$ is equivalent to a random vector $(y_1,y_2,\dots,y_n)/\sum_i y_i$, where $y_i \sim \text{Gamma}(\alpha_i,1)$ independently. And we have $\max_i\{x_i\}=\max_i\{y_i\}/\sum_i y_i$.

Assume $\alpha_i=\alpha$, so we have
$$
\text{Pr}(\max_i \{y_i\} \geq \alpha+c_1\sqrt{\alpha}) \leq n e^{-\frac{c_1}{2}\min\{\frac{c_1}{2},\sqrt{\alpha}\}}.
$$

And since $\sum_i y_i \sim \text{Gamma}(n\alpha,1)$, we have
$$
\text{Pr}(\sum_i y_i \leq n\alpha - c_2\sqrt{n\alpha}) \leq e^{-\frac{c_2^2}{2}}
$$

By setting $c_1=2\log(n/\delta_1)/\sqrt{\alpha}$ (when $n>\delta_1 e^{\alpha}$) and $c_2 = \sqrt{2\log(1/\delta_2)}$, we have that with probability at least $1-\delta_1-\delta_2$,
$$
\max_i\{x_i\} \leq \frac{1}{n}\frac{\alpha+\log(n/\delta_1)}{\alpha - \sqrt{2\alpha\log(1/\delta_2)/n}}
$$

Similarity, $\min_i\{x_i\}=\min_i\{y_i\}/\sum_i y_i$. And
\begin{align*}
\text{Pr}(\min_i \{x_i\} \leq \alpha - c_1 \sqrt{\alpha}) \leq & n e^{-\frac{c_1^2}{2}},
\\
\text{Pr}(\sum_i y_i \geq n\alpha + c_2\sqrt{n\alpha}) \leq & e^{-\frac{c_2}{2}\min\{\frac{c_2}{2},\sqrt{n\alpha}\}}.
\end{align*}

By setting $c_1 = \sqrt{2\log(n/\delta_1)}$ and $ c_2 = \sqrt{2\log(1/\delta_2)}$ (when $\delta_2 > e^(-2\alpha)$), we have that with probability at least $1-\delta_1-\delta_2$,
$$
\min_i\{x_i\} \geq \frac{1}{n}   \frac{   \alpha - \sqrt{2\log(n\alpha/\delta_1)} }{  \alpha + \sqrt{2\alpha\log(1/\delta_2)/n}    }
$$
which is nontrivial only when $\alpha$ is large enough.

\section{Variance Calculation for LDA}\label{apd:lda}

In this section, we presents the overall procedure and some important intermediate results of the variance calculation for LDA. Note that we have the following assumptions on the scale of each statistics or parameters: $L=\mathcal{O}(D)$, $V=\mathcal{O}(D)$, $L=\mathcal{O}(V)$, $K=\mathcal{O}(L)$, $1/K=\mathcal{O}(1)$, $\alpha=\Theta(1)$, and $\beta=\Theta(1)$.

First, we have
\begin{align*}
R= & \frac{1}{D}\sum_{d}\frac{1}{L(L-1)}\sum_{l\neq s}x_{d,l}x_{d,s}^{\top}\\
- & \frac{\alpha_{0}}{\alpha_{0}+1}[\frac{1}{D}\sum_{d}\frac{1}{L}\sum_{l}x_{d,l}][\frac{1}{D}\sum_{d}\frac{1}{L}\sum_{l}x_{d,l}]^{\top}\\
-&M_{2}.
\end{align*}

We represent each term by
\begin{align*}
R^{(1)}=&\frac{1}{D}\sum_{d}\frac{1}{L(L-1)}\sum_{l\neq s}x_{d,l}x_{d,s}^{\top},
\\
R^{(2)}=&\frac{\alpha_{0}}{\alpha_{0}+1}[\frac{1}{D}\sum_{d}\frac{1}{L}\sum_{l}x_{d,l}][\frac{1}{D}\sum_{d}\frac{1}{L}\sum_{l}x_{d,l}]^{\top},
\\
R^{(3)}=&\frac{1}{D}\sum_{d}\frac{1}{L}\sum_{l}x_{d,l}.
\end{align*}

And we have the following identity:
\begin{align*}
E_{\mu}Var_{X}[R_{ij}]= & E_{\mu}Var_{X}[R_{ij}^{(1)}]+E_{\mu}Var_{X}[R_{ij}^{(2)}]\\
- &2 E_{\mu}Cov_{X}[R_{ij}^{(1)},R_{ij}^{(2)}],
\end{align*}
with $H=\{\mu,h\}$, $X=\{h,x\}$.
\begin{align*}
R_{ij}^{(2)}=\frac{\alpha_{0}}{\alpha_{0}+1}R_{i}^{(3)}R_{j}^{(3)}.
\end{align*}

For simplicity of representation, we assume the following,
\begin{align*}
f_{d}^{(ij)}= &\frac{1}{L(L-1)}\sum_{l\neq s}^{L}x_{d,l}^{(i)}x_{d,s}^{(j)},
\\
g_{d}^{(i)}= & \frac{1}{L}\sum_{l=1}^{L}x_{d,l}^{(i)}.
\end{align*}
and the superscript $(ij)$ or $(i)$ will be omitted if there is
no ambiguity. By this representation, we have
\begin{align*}
R^{(1)}=&\frac{1}{D}\sum_{d}f_{d},
\\
R^{(3)}=&\frac{1}{D}\sum_{d}g_{d}.
\end{align*}

We also assume the representation $z_{d}^{(i)}=\sum_{k}\mu_{k}^{(i)}h_{d}^{(k)}$,
which is the probability of $e_{i}$ in the $d$-th documents conditioned
on $H=\{\mu,h\}$. And $\delta_{ij}=1$ if and only if $i=j$.

The intermediate results for diagonal and off-diagonal variance are different, so we provide them separately in the following sections.

\subsection{Calculate Off-diagonal Variance}
In this section, we assume that $i\neq j$. And we have the following results:
\begin{align*}
E_{\mu}Var_{X}[R_{ij}^{(1)}]
\leq& \frac{1}{DL^{2}V^{2}}+\frac{2}{DLV^{3}}+\frac{1}{DV^{4}}+O(\epsilon)
\\
E_{\mu}Var_{X}[R_{ij}^{(2)}]
\leq& \frac{2}{DLV^{3}} + \frac{1}{DV^{4}}+O(\epsilon)
\\
E_{\mu}Cov_{X}(R^{(1)},R^{(2)}) \geq& \frac{2}{DLV^{3}} +O(\epsilon)
\end{align*}

Therefore, we have that
\begin{align*}
E_{\mu}Var_{X}[R_{ij}] \leq & \frac{1}{DL^{2}V^{2}}+\frac{2}{DLV^{3}}+\frac{1}{DV^{4}} +  \frac{2}{DLV^{3}} \\
+& \frac{1}{DV^{4}} - \frac{4}{DLV^{3}} +O(\epsilon) \\
= & \frac{1}{DL^{2}V^{2}} + \frac{2}{DV^{4}} + O(\epsilon).
\end{align*}

\subsection{Calculate Diagonal Variance}

In this section, we assume that $i\neq j$. And we have the following results:
\begin{align*}
E_{\mu}Var_{X}[R_{ij}^{(1)}]
\leq & \frac{1}{DL^{2}V}+\frac{4}{DLV^{3}}+\frac{1}{DV^{4}}+O(\epsilon),
\\
E_{\mu}Var_{X}[R_{ij}^{(2)}]
\leq& \frac{2}{DLV^{3}}  + \frac{1}{DV^{4}} + O(\epsilon),
\\
E_{\mu}Cov_{X}(R^{(1)},R^{(2)}) \geq & \frac{3}{DLV^{3}} +O(\epsilon).
\end{align*}

Therefore, we have that
\begin{align*}
E_{\mu}Var_{X}[R_{ij}] \leq & \frac{1}{DL^{2}V}+\frac{4}{DLV^{3}}+\frac{1}{DV^{4}}+  \frac{2}{DLV^{3}} \\
 + &\frac{1}{DV^{4}} - \frac{6}{DLV^{3}} +O(\epsilon) \\
= & \frac{1}{DL^{2}V} + \frac{2}{DV^{4}} + O(\epsilon).
\end{align*}

\end{document}